\documentclass{article}

\usepackage{microtype}
\usepackage{graphicx}
\usepackage{subcaption}
\usepackage{booktabs} 
\usepackage{enumitem}
\usepackage{hyperref}

\usepackage[preprint]{icml2026}

\usepackage{mathtools}
\usepackage{amsmath,amssymb,amsfonts,amsthm}
\usepackage{bm}
\usepackage{bbm}
\usepackage{graphicx}
\usepackage{booktabs}
\usepackage{tabularx}
\usepackage{algorithm}
\usepackage{algorithmic}
\usepackage{multirow}
\usepackage{float}
\usepackage{placeins}
\usepackage{microtype}
\usepackage{array}

\usepackage[capitalize,noabbrev]{cleveref}

\theoremstyle{plain}
\newtheorem{theorem}{Theorem}[section]
\newtheorem{proposition}[theorem]{Proposition}

\theoremstyle{definition}
\newtheorem{definition}[theorem]{Definition}
\newtheorem{assumption}[theorem]{Assumption}
\theoremstyle{remark}

\usepackage[textsize=tiny]{todonotes}

\icmltitlerunning{PLATE: Plasticity-Tunable Efficient Adapters for Geometry-Aware Continual Learning}

\begin{document}

\twocolumn[
  \icmltitle{PLATE: Plasticity-Tunable Efficient Adapters for Geometry-Aware Continual Learning}


  \icmlsetsymbol{equal}{*}

  \begin{icmlauthorlist}
    \icmlauthor{Romain Cosentino}{yyy}
  \end{icmlauthorlist}

  \icmlaffiliation{yyy}{Salesforce AI Research}
 
  \icmlcorrespondingauthor{Romain Cosentino}{rom.cosentino@gmail.com}

  \icmlkeywords{Machine Learning, ICML}

  \vskip 0.3in
]



\printAffiliationsAndNotice{}  
\begin{abstract}
We develop a continual learning method for pretrained models that \emph{requires no access to old-task data},
addressing a practical barrier in foundation model adaptation where pretraining distributions are often unavailable. Our key observation is that pretrained networks exhibit substantial \emph{geometric redundancy}, and that this redundancy can be exploited in two complementary ways. First, redundant neurons provide a proxy for dominant pretraining-era feature directions, enabling the construction of approximately protected update subspaces directly from pretrained weights. Second, redundancy offers a natural bias for \emph{where} to place plasticity: by restricting updates to a subset of redundant neurons and constraining the remaining degrees of freedom, we obtain update families with reduced functional drift on the old-data distribution and improved worst-case retention guarantees. These insights lead to \textsc{PLATE} (\textbf{Pla}sticity-\textbf{T}unable \textbf{E}fficient Adapters), a continual learning method requiring no past-task data that provides explicit control over the plasticity-retention trade-off. PLATE parameterizes each layer with a structured low-rank update $\Delta W = B A Q^\top$, where $B$ and $Q$ are computed once from pretrained weights and kept frozen, and only $A$ is trained on the new task. Code is available at \url{https://github.com/SalesforceAIResearch/PLATE}.
\end{abstract}

\section{Introduction}

Deep neural networks trained sequentially on multiple tasks tend to forget what they have learned before: performance on old tasks degrades as the model is updated on new data~\citep{mccloskey1989catastrophic,ratcliff1990connectionist,french1999catastrophic,goodfellow2015empirical,parisi2019continual}. In modern practice, a large backbone is first pretrained on a massive, opaque distribution, and is then adapted to downstream tasks via instruction tuning, domain adaptation, or reinforcement learning \cite{olmo2025olmo3}. Fully fine-tuning all parameters on each new task is often too expensive and can severely erode core capabilities such as factual knowledge, reasoning, or instruction following.

Parameter-efficient fine-tuning (PEFT) has become the de facto solution to the cost side of this problem: instead of updating all parameters, one modifies only a small subset or a low-dimensional subspace of them using so called adapters~\citep{houlsby2019adapter,li2021prefixtuning,lester2021power,hu2021lora,he2022towards}. These methods achieve strong performance on new tasks while training and storing only a small fraction of the backbone weights. However, PEFT alone does not eliminate catastrophic forgetting nor ensure the preservation of prior capabilities: even when only adapter parameters are trained, fine-tuning still erodes pretraining-era behavior and generalization~\citep{qiao2024learnmore,zhao2024safe,kalajdzievski2024scalinglawsforgettingfinetuning}. Recent work has therefore begun to study continual learning specifically under PEFT. For instance, by using small auxiliary ``context'' sets to shape knowledge-preserving adapter subspaces~\citep{yang2024corda} or forcing orthogonality between successive tasks \cite{wang2023orthogonal}. In contrast, our approach targets the fully data-free setting with respect to the old-task distribution: both the protected input subspace and the subset of trainable redundant neurons are inferred directly from frozen pretrained weights. To the best of our knowledge, PLATE is among the first continual-learning methods to derive \emph{both} ingredients in a fully weight-only manner.

\begin{figure*}[t]
  \centering
  \includegraphics[width=\linewidth]{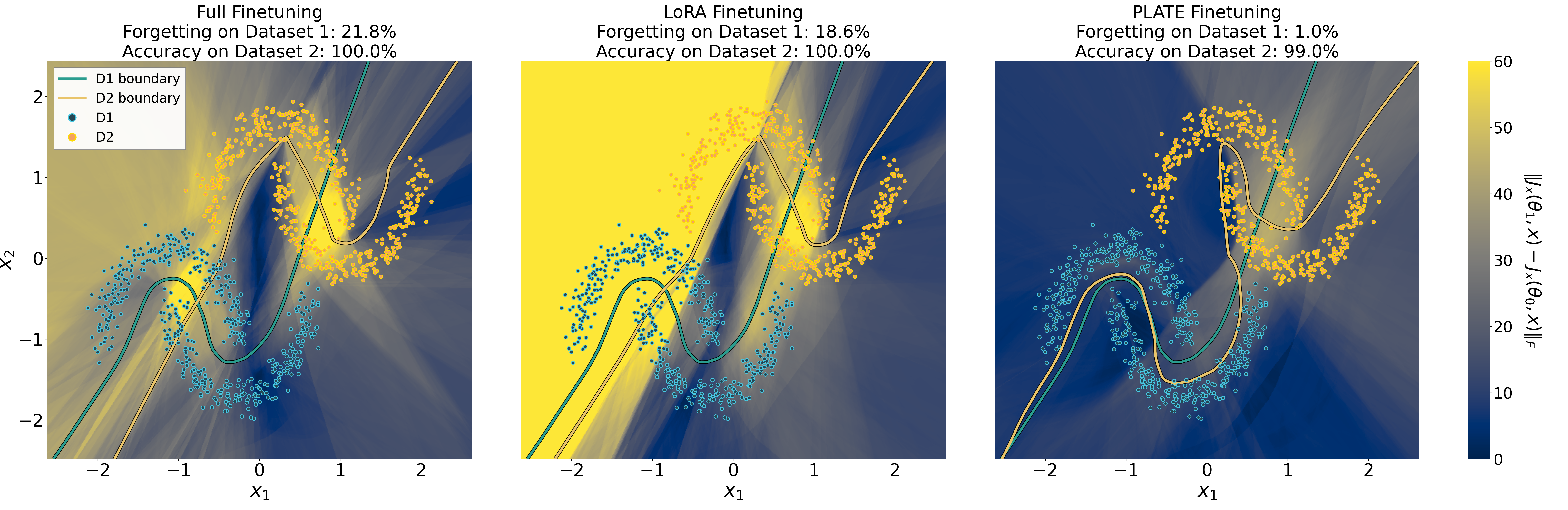}
  \vspace{-.6cm}
\caption{\textbf{Local-geometry view of forgetting on a continual learning $2$-dimensional binary classification problem:}
Blue points denote the old-task dataset $P_0$ and yellow points the new-task dataset $P_1$; decision boundaries are shown when trained on $P_0$ (blue curve) and after training on $P_1$ (yellow curve). The background heatmap visualizes how training on $P_1$ changes the model's local input-output linearization,
$\Delta(x)\coloneqq\|J_x(\theta_1,x)-J_x(\theta_0,x)\|_F$, where $J_x(\theta,x)\coloneqq\nabla_x f_\theta(x)$ denotes the input Jacobian; this is distinct from the parameter Jacobian $J_\theta(x)\coloneqq\nabla_\theta f_\theta(x)$ used throughout Sections~\ref{sec:approx-orth}--\ref{sec:gn-bound}. \emph{Retention is compromised when the heatmap turns yellow around the blue points} (large drift on $\mathrm{supp}(P_0)$), while \emph{effective learning requires yellow regions around the yellow points} (large drift concentrated near $\mathrm{supp}(P_1)$). \emph{This motivates our goal: parameter-efficient continual updates that localize drift away from the (often unavailable) old distribution while remaining expressive on the new task.}
(\textbf{Left}) Full fine-tuning induces large changes throughout, including on $\mathrm{supp}(P_0)$, and the old boundary drifts.
(\textbf{Middle}) LoRA restricts the parameter update but still produces substantial change on $\mathrm{supp}(P_0)$.
(\textbf{Right}) PLATE updates keep $\Delta(x)$ small on $\mathrm{supp}(P_0)$ while permitting large changes near $P_1$, concentrating plasticity where it is needed and preserving old behavior (see Figure~\ref{fig:plate-parameter-sweep} for PLATE hyperparameters sweep).}
\label{fig:toy-2d}
\end{figure*}

A natural way to mitigate forgetting is to constrain new-task updates so that they are ``invisible'' to the old task. Existing continual-learning methods implement this idea in different ways. Regularization-based approaches such as Elastic Weight Consolidation~\citep{kirkpatrick2017ewc}, Synaptic Intelligence~\citep{zenke2017si}, and Memory Aware Synapses~\citep{aljundi2018mas} penalize movement along directions that were important for previous tasks. Replay and constrained-gradient methods~\citep{lopez2017gem,chaudhry2019agem}, use stored examples from old tasks to project gradients and reduce interference. Explicit orthogonality-based approaches such as Orthogonal Weight Modification and Orthogonal Gradient Descent~\citep{zeng2019owm,farajtabar2020ogd, saha2021gpm} estimate the feature subspace used by old tasks and project new gradients into its orthogonal complement. Architectural methods like Progressive Networks~\citep{rusu2016progressive} and Dynamically Expandable Networks~\citep{yoon2018den} go further by freezing old parameters and routing new tasks through newly added capacity. Mixture-of-Experts architectures have also been explored for continual learning by routing different tasks/inputs to specialized experts, reducing interference across tasks \citep{doan2024moecl}.

These strategies reduce forgetting to some extent, but they face serious limitations in the regime that motivates contemporary training approaches. First, most methods assume continued access to old-task data (or at least stored examples/gradients/features). In large-scale models, this assumption is often violated: pretraining data are often proprietary, massive, and unavailable. Second, many approaches operate at the level of global gradients over the full model and are not computationally efficient at Large Language Model (LLM) scale. Third, even when old data are available, orthogonality constraints are inherently approximate: whether one estimates protected subspaces from stored data or from curvature surrogates such as Fisher information, feature statistics are noisy, nullspaces are only identified up to finite-sample and numerical error, and projections can only be enforced approximately. Finally, they are usually not applicable to the LLM scale because of their inherent computational burden. 

In this work we ask: \emph{Can we design a continual learning method that reduces forgetting without access to prior-task data, while remaining practical at scale?}

Our approach is motivated by two observations. First, orthogonality-based constraints alone do not eliminate forgetting: if the allowed update family is only approximately
orthogonal to old-task features, then there exist directions within it that still induce a non-trivial increase in the old-task loss. Second, pretrained models exhibit substantial redundancy: many neurons implement highly similar features, often visible as strong colinearity among weight vectors, a phenomenon induced by overparameterization, dropout, and the implicit bias of
training dynamics \citep{srivastava2014dropout,balestriero2019geometry,you2021max}.

This suggests two complementary opportunities for continual learning without past-task data: $(i)$ redundancy can serve as a \emph{weight-only proxy} for dominant old-task feature directions, enabling approximate protected subspaces
computed directly from frozen weights, $(ii)$ redundancy can guide \emph{where to place plasticity}: concentrating updates on redundant neurons biases adaptation toward
directions that are less function-changing on the pre-trained distribution.

These insights motivate \textsc{PLATE} (\textbf{Pla}sticity-\textbf{T}unable \textbf{E}fficient Adapters), a PEFT method that implements exactly this recipe. At each layer, before training, PLATE $(i)$ identifies a set of trainable redundant neurons from frozen weights, $(ii)$ builds a low-energy input subspace from the pre-trained weights, both contributing to the protection of the pre-training distribution of the model. We parameterize the model updates as a low-rank adapter: concretely, each adapted layer uses updates of the form $B^{(\ell)}A^{(\ell)}Q^{(\ell)\top}$, where $B^{(\ell)}$ (frozen) selects redundant output neurons, $Q^{(\ell)}$ (frozen) spans a weight-based approximate pre-training distribution nullspace, and $A^{(\ell)}$ are trainable parameters. This yields a parameter efficient geometry-aware adapter that requires no access to pretraining data, exposes an explicit plasticity trade-off via the estimated (resp. selected) input (resp. output) dimensions of the $A$ matrix, and inherits the computational benefits of PEFT methods.

The paper is organized as follows: $(i)$ We show that any approximately protected update family admits a nonzero worst-case forgetting floor and provide a weight-only construction of approximate protected directions from pretrained redundancy (Sections~\ref{sec:orthogonality}).
$(ii)$ We relate worst-case forgetting over an update family to old-task restricted curvature, and upper bound this curvature by first-order functional drift (Sections~\ref{sec:maso-selection}).
$(iii)$ We propose \textsc{PLATE}, a data-free continual PEFT adapter $\Delta W=BAQ^\top$ that concentrates plasticity on redundant channels and restricts updates to a low-energy input subspace computed once from frozen weights (Section~\ref{sec:plate}).
$(iv)$ Across controlled continual-learning benchmarks and LLM specialization, PLATE improves retention over LoRA \citep{hu2021lora} at similar trainable budgets while preserving new-task performance (Section~\ref{sec:experiments}).

\section{Data-Free Constraints for Continual Learning}
\label{sec:orthogonality}
This section develops three ingredients for data-free continual learning: $(i)$ an exact invariance condition that yields zero forgetting but requires access to $P_0$, $(ii)$ a lower bound showing why approximate orthogonality admits a worst-case forgetting floor, and $(iii)$ a weight-only route to approximate protected subspaces via redundancy.

In this paper, we generally consider a two-task continual-learning setting. An old task is represented by a distribution $P_0$ over $(x,y)$ and a new task by $P_1$. A network $f_\theta$ is trained on $P_0$ to obtain parameters $\theta_0$, and then adapted on $P_1$ to obtain $\theta_1 = \theta_0 + \Delta\theta$. We define $L_0(\theta) \coloneqq \mathbb{E}_{(x,y)\sim P_0}\allowbreak\big[\ell(f_\theta(x),y)\big]$
and
$L_1(\theta) \coloneqq \mathbb{E}_{(x,y)\sim P_1}\allowbreak\big[\ell(f_\theta(x),y)\big]$
as the old- and new-task losses, respectively.
The forgetting on $P_0$ is
$\mathcal{F}_0(\theta_0,\theta_1)\coloneqq L_0(\theta_1)\allowbreak - L_0(\theta_0)$.

\subsection{Layerwise exact invariance orthogonality}
\label{sec:exact-orthogonality}
Let $h_\theta^{(\ell)}(x)$ denote the post-activation of layer $\ell$ and $z_\theta^{(\ell)}(x)$ its pre-activation. Ignoring biases for clarity
\[
z_\theta^{(\ell)}(x)=W^{(\ell)}h_\theta^{(\ell-1)}(x),
\qquad
h_\theta^{(\ell)}(x)=\sigma\!\big(z_\theta^{(\ell)}(x)\big).
\]
For an update $\Delta\theta=\{\Delta W^{(\ell)}\}_\ell$, we define a point-wise orthogonality condition on $P_0$.

\begin{definition}[Layerwise, per-sample orthogonality on $P_0$]
\label{def:per-sample-orth}
We say that $\Delta\theta$ is \emph{per-layer orthogonal on $P_0$} if for every layer $\ell$ and every $x\in\mathrm{supp}(P_0)$
\begin{equation}
\Delta W^{(\ell)}\,h_{\theta_0}^{(\ell-1)}(x)=0.
\label{eq:per-sample-orth}
\end{equation}
\end{definition}

\begin{proposition}[Per-layer orthogonality yields no forgetting  (Proof in Appendix~\ref{proof:no-forgetting-exact})]
\label{prop:no-forgetting-exact}
If $\Delta\theta$ satisfies~\eqref{eq:per-sample-orth}, then $\mathcal{F}_0(\theta_0,\theta_0+\Delta\theta)=0$.
\end{proposition}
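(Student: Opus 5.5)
The plan is to prove, by induction on the layer index, that the perturbed network reproduces every hidden activation of the original network at each point of $\mathrm{supp}(P_0)$. Set $\theta_1=\theta_0+\Delta\theta$ and fix $x\in\mathrm{supp}(P_0)$. The inductive claim is
\[
h_{\theta_1}^{(\ell)}(x)=h_{\theta_0}^{(\ell)}(x) \quad \text{for all } \ell=0,1,\dots,L .
\]
The base case $\ell=0$ is immediate, since $h^{(0)}(x)=x$ does not depend on the parameters.

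For the inductive step, assume $h_{\theta_1}^{(\ell-1)}(x)=h_{\theta_0}^{(\ell-1)}(x)$. Then
\[
z_{\theta_1}^{(\ell)}(x)=(W^{(\ell)}+\Delta W^{(\ell)})h_{\theta_1}^{(\ell-1)}(x)=W^{(\ell)}h_{\theta_0}^{(\ell-1)}(x)+\Delta W^{(\ell)}h_{\theta_0}^{(\ell-1)}(x).
\]
The last term vanishes by~\eqref{eq:per-sample-orth}. Hence $z_{\theta_1}^{(\ell)}(x)=z_{\theta_0}^{(\ell)}(x)$, and applying $\sigma$ gives $h_{\theta_1}^{(\ell)}(x)=h_{\theta_0}^{(\ell)}(x)$.

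The step I expect to need the most care is the order in which the hypothesis and the inductive assumption are used. Condition~\eqref{eq:per-sample-orth} is stated only for the \emph{old} activations $h_{\theta_0}^{(\ell-1)}(x)$, not for the activations of the updated network. The induction hypothesis is exactly what converts $h_{\theta_1}^{(\ell-1)}(x)$ into $h_{\theta_0}^{(\ell-1)}(x)$ so that the hypothesis can be applied at layer $\ell$.

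The final layer is treated the same way. The output $f_\theta(x)$ is either $z^{(L)}$ or $h^{(L)}$, possibly followed by a fixed readout, and in every case it is covered by the same argument. This gives $f_{\theta_1}(x)=f_{\theta_0}(x)$ for every $x\in\mathrm{supp}(P_0)$.

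It follows that $\ell(f_{\theta_1}(x),y)=\ell(f_{\theta_0}(x),y)$ for every $(x,y)$ with $x\in\mathrm{supp}(P_0)$. The two integrands then agree $P_0$-almost surely, so $L_0(\theta_1)=L_0(\theta_0)$, and therefore $\mathcal{F}_0(\theta_0,\theta_0+\Delta\theta)=0$.

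If biases are included, I would add the requirement that the bias updates vanish, or treat each bias as weights acting on a constant coordinate appended to $h$. The orthogonality condition then covers the biases as well, and the argument goes through unchanged.
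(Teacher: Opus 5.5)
Your proof is correct and follows the paper's argument essentially verbatim. It is the same induction on layers showing $h_{\theta_1}^{(\ell)}(x)=h_{\theta_0}^{(\ell)}(x)$ on $\mathrm{supp}(P_0)$, using the inductive hypothesis to replace the new activations by the old ones before invoking~\eqref{eq:per-sample-orth}; the differences are cosmetic (base case at $\ell=0$ instead of $\ell=1$, plus explicit remarks on biases, the readout, and the almost-sure equality of the loss integrands, which the paper leaves implicit).
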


Proposition~\ref{prop:no-forgetting-exact} is an exact invariance statement. Its limitation is practical: enforcing~\eqref{eq:per-sample-orth} requires access to old-task features $h_{\theta_0}^{(\ell-1)}(x)$ over $\mathrm{supp}(P_0)$ and building an orthogonal complement (or projector) to their span, which is infeasible when the old distribution is unavailable and costly even when a replay buffer exists.

\subsection{Approximate orthogonality implies a forgetting floor}

\label{sec:approx-orth}

In practice, orthogonality constraints are approximate. We quantify approximation using a first-order linearization and a distributional drift measure.

For small $\Delta\theta$, we use the following first-order linearization
\[
f_{\theta_0+\Delta\theta}(x)\approx f_{\theta_0}(x)+J_{\theta_0}(x)\Delta\theta,
\]
where $J_{\theta_0}(x)\coloneqq \nabla_\theta f_\theta(x)\big|_{\theta=\theta_0}$ is the Jacobian of the model output with respect to the parameters, evaluated at $\theta_0$. Thus $J_{\theta_0}(x)\Delta\theta$ gives the first-order change in the output at input $x$ induced by the parameter perturbation $\Delta\theta$.

To measure this drift under $x\sim P_0$, we define $\|J_{\theta_0}\Delta\theta\|_{L_2(P_0)}
\coloneqq \left(\mathbb{E}_{x\sim P_0}\big[\|J_{\theta_0}(x)\Delta\theta\|_2^2\big]\right)^{1/2}$. Now, 
If updates are restricted to a linear subspace $S\subset\mathbb{R}^{\dim(\theta)}$,
we define the worst-case unit-norm drift radius as
\begin{equation}
\label{eq:eps-def}
\varepsilon(S)
\coloneqq
\sup_{\|\Delta\theta\|_2=1,\ \Delta\theta\in S}\ 
\|J_{\theta_0}\Delta\theta\|_{L_2(P_0)}.
\end{equation}
In particular, $\varepsilon(S)=0$ means that $J_{\theta_0}(x)\Delta\theta=0$ for
all $x\in\mathrm{supp}(P_0)$ and all $\Delta\theta\in S$, i.e., updates in $S$
induce no \emph{first-order} output change on old inputs.  This is a linearized
analogue of the exact invariance in Proposition~\ref{prop:no-forgetting-exact}.
Note that $\varepsilon(S)$ provides a quantitative measure of how “orthogonal”
the update subspace $S$ is to the old-task features: it captures the worst-case
first-order output drift on $P_0$ induced by a unit-norm update in $S$. Figure~\ref{fig:toy-2d} provides an input-space visualization of this notion of drift: the heatmap tracks changes in the local input--output linearization around old and new data samples.

We now denote by $g_0 \coloneqq \nabla_\theta L_0(\theta_0)$ and $H_0 \coloneqq \nabla_\theta^2 L_0(\theta_0)$ the loss gradient and Hessian at $\theta_0$. Note that, for a model that has been well-optimized on $P_0$, we typically have $g_0\approx 0$, so small-step changes in $L_0$ are dominated by the quadratic term governed by $H_0$.

The following theorem shows that as soon as $\varepsilon(S) > 0$, e.g., as soon
as orthogonality is only approximate, there always exists a direction inside $S$
that incurs a non-trivial amount of forgetting.

\begin{theorem}[Lower bound on worst-case forgetting under approximate orthogonality (Proof in Appendix \ref{proof:lower-bound-orth})]
\label{thm:lower-bound-orth}
Assume a curvature link between $H_0$ and
$J_{\theta_0}$ (Assumption~\ref{ass:curvature-link} in
Appendix~\ref{proof:lower-bound-orth}), then,  there exist constants $c>0$ and $\rho>0$
such that $\exists \Delta\theta\in S$ with $\|\Delta\theta\|_2=\rho$ for which
\[
\mathcal{F}_0(\theta_0,\theta_0+\Delta\theta)
\;\ge\;
c\,\rho^2\,\varepsilon(S)^2 + O(\rho^3),
\]
where $c>0$ depends only on local properties of $L_0$ around $\theta_0$, in particular the curvature-link in Assumption~\ref{ass:curvature-link}.
\end{theorem}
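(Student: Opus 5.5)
The plan is a second-order Taylor expansion of $L_0$ around $\theta_0$, restricted to the subspace $S$, followed by choosing the update along the worst-case drift direction. I take the curvature link (Assumption~\ref{ass:curvature-link}) to be a Gauss--Newton-type lower bound. Concretely, I assume there is $\mu>0$ such that, for all $\Delta\theta$ in a neighborhood of $0$ (or at least for all $\Delta\theta\in S$),
\[
\Delta\theta^\top H_0\,\Delta\theta \;\ge\; \mu\,\|J_{\theta_0}\Delta\theta\|_{L_2(P_0)}^2 .
\]
This is the natural form when $H_0 = \mathbb{E}[J^\top \nabla^2_f\ell\, J] + R$, the loss is $\mu$-strongly convex in the output, and the residual term $R$ (weighted by output-space residuals) is dominated. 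I also need the first-order term to be harmless. The assumption should either give $g_0^\top\Delta\theta = 0$ on $S$ (e.g. $g_0=0$ at a well-optimized $\theta_0$), or I handle the sign by choosing between $\pm\Delta\theta$.

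The first step is to pick the direction. Since $S$ is a finite-dimensional subspace, the unit sphere in $S$ is compact. The map $u\mapsto \|J_{\theta_0}u\|_{L_2(P_0)}$ is continuous, since it is a seminorm induced by the PSD matrix $\mathbb{E}[J^\top J]$. So the supremum in \eqref{eq:eps-def} is attained at some unit $u^\star\in S$ with $\|J_{\theta_0}u^\star\|_{L_2(P_0)}=\varepsilon(S)$. Replacing $u^\star$ by $-u^\star$ if necessary, I ensure $g_0^\top u^\star\ge 0$; the drift value is unchanged because it is even in $u$.

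The second step is the expansion. Set $\Delta\theta=\rho u^\star$. Assuming $L_0$ is $C^3$ near $\theta_0$, with third derivative bounded by a local constant $M$, Taylor's theorem gives
\[
\mathcal{F}_0(\theta_0,\theta_0+\rho u^\star)=\rho\, g_0^\top u^\star+\tfrac{\rho^2}{2}\,u^{\star\top}H_0u^\star+O(\rho^3).
\]
I then drop the nonnegative linear term and apply the curvature link. This yields $\mathcal{F}_0\ge \tfrac{\mu}{2}\rho^2\varepsilon(S)^2+O(\rho^3)$, so $c=\mu/2$. The constant in $O(\rho^3)$ is bounded by $M/6$ uniformly over unit directions, so it depends only on local properties of $L_0$. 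Any $\rho$ inside the Taylor neighborhood works. If one wants the bound to be nontrivial, $\rho$ can be taken below $3\mu\varepsilon(S)^2/M$.

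The main obstacle is pinning down the curvature link so that it legitimately transfers Hessian curvature to functional drift. The full Hessian contains the non-Gauss--Newton term $\mathbb{E}[\sum_k \partial_{f_k}\ell\,\nabla^2_\theta f_k]$, which can be indefinite. My first approach is to state the assumption directly as the inequality above on $S$, justified by small residuals at a well-fit $\theta_0$. The fallback is a weaker form $H_0\succeq \mu G_0-\eta I$, with $G_0=\mathbb{E}[J^\top\nabla_f^2\ell\,J]$. This gives $c\,\varepsilon(S)^2$ replaced by $\tfrac12(\mu\varepsilon(S)^2-\eta)$, and the theorem's statement then requires $\eta$ small relative to $\mu\varepsilon(S)^2$. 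Everything else is routine compactness and Taylor bookkeeping.
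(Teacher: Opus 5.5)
Your proposal is correct and follows the paper's proof essentially step for step: compactness of the unit sphere in $S$ to attain $\varepsilon(S)$ at some $u^\star$, scaling to $\rho u^\star$, applying the curvature link $\Delta\theta^\top H_0\Delta\theta\ge\mu_0\,\mathbb{E}_{x\sim P_0}\|J_{\theta_0}(x)\Delta\theta\|_2^2$ on $S$ (exactly the paper's Assumption~\ref{ass:curvature-link}), then a second-order Taylor expansion giving $c=\mu_0/2$. Your one deviation, choosing the sign of $u^\star$ so that $g_0^\top u^\star\ge 0$, improves slightly on the paper, which simply drops the linear term by appealing to $g_0\approx 0$; your version makes the inequality rigorous without assuming stationarity.
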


In other words, approximate protection leaves an unavoidable worst-case forgetting floor.
As soon as $\varepsilon(S)>0$, meaning the update family $S$ permits some nonzero first-order output drift on old inputs, there exists a direction in $S$ that increases the old-task loss by at least $\rho^2\varepsilon(S)^2$ for sufficiently small step size $\rho$.

\subsection{Data-free protected subspaces from neuron redundancy}
\label{sec:redundancy-datafree}
Orthogonality-based continual-learning methods estimate protected subspaces from old-task data
(e.g., gradients or feature covariances) and constrain updates accordingly. In large pretrained models, the pretraining distribution
$P_0$ is typically unavailable (and replay is often infeasible), motivating a \emph{weight-only} proxy for
old-feature directions.

Pretrained networks are highly redundant and compressible
\cite{han2015deep,han2015learning,hoefler2021sparsity,frankle2019lottery}, implying that many neurons
capture repeated directions. An explanation for why such repeated
directions are tied to the data distribution is provided by deep neural collapse analyses. In particular, \cite{garrod2024persistence} derive an explicit deep neural collapse structured
solution form in a deep linear unconstrained-features model. A key consequence is that, at each layer, weight vectors are confined to a low-dimensional data-dependent subspace.

\begin{proposition}[Layerwise weights lie in a data-dependent prototype subspace {\cite{garrod2024persistence}} (Proof in Appendix~\ref{proof:layerwise_prototype_span})]
\label{prop:layerwise_prototype_span}
For each layer $\ell$ there exist prototype directions $\{\mu^{(\ell)}_c\}_{c=1}^K$ (per layer class-mean feature directions) such
that every row $w^{(\ell)}_j$ of $W^{(\ell)}$ satisfies $w^{(\ell)}_j \in \mathrm{span}\{\mu^{(\ell)}_1,\dots,\mu^{(\ell)}_K\}$. 
\end{proposition}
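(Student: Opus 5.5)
We plan to work inside the deep linear unconstrained-features model (UFM) of \cite{garrod2024persistence} and to read the statement as a consequence of the explicit structure of its deep neural collapse (DNC) optimal solutions. In that model the network is $W^{(L)}\cdots W^{(1)}H^{(0)}$, where $H^{(0)}\in\mathbb{R}^{d\times N}$ are free features and the objective is a squared (or cross-entropy) loss with weight decay on every factor. At a global minimizer the unconstrained features collapse to their class means, so $H^{(0)}$ has column space $\mathrm{span}\{\mu^{(0)}_1,\dots,\mu^{(0)}_K\}$, which has rank at most $K$. We then define the layer-$\ell$ prototypes $\mu^{(\ell)}_c \coloneqq W^{(\ell)}\cdots W^{(1)}\mu^{(0)}_c$; these are exactly the per-layer class-mean features. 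Throughout we identify the row $w^{(\ell)}_j$ with a vector in the input space of layer $\ell$, so the claim is that each row lies in $\mathrm{span}\{\mu^{(\ell-1)}_c\}$, the span of the class-mean features \emph{entering} the layer. We will adopt this indexing convention explicitly.

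The key steps are as follows. First, we use the first-order optimality conditions with weight decay. The condition for layer $\ell$ reads $\lambda W^{(\ell)} = G^{(\ell)\top} H^{(\ell-1)\top}$-type terms, namely $\lambda W^{(\ell)} = -\,\big(\partial \mathcal{L}/\partial Z^{(\ell)}\big)\,H^{(\ell-1)\top}$. Consequently every row of $W^{(\ell)}$ is a linear combination of the columns of $H^{(\ell-1)}$, that is, of the features entering layer $\ell$. This step requires $\lambda>0$ and stationarity, and nothing else.

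Second, we show by induction on $\ell$ that the columns of $H^{(\ell-1)}$ lie in $\mathrm{span}\{\mu^{(\ell-1)}_c\}$. The base case is the collapse of $H^{(0)}$ at the optimum; here we invoke the within-class variability collapse (NC1) established in \cite{garrod2024persistence}, or argue directly that weight decay on $H^{(0)}$ together with a loss depending only on class labels forces every sample of class $c$ onto a common vector. For the inductive step, the map $H^{(\ell)}=W^{(\ell)}H^{(\ell-1)}$ is linear, so it carries the span of the class means onto the span of the pushed-forward means.

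Third, we combine the two steps. Each row $w^{(\ell)}_j$ lies in the column space of $H^{(\ell-1)}$, which is contained in $\mathrm{span}\{\mu^{(\ell-1)}_1,\dots,\mu^{(\ell-1)}_K\}$. Relabeling per the paper's layer indexing then gives the claim.

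We expect the main obstacle to be the base case, that is, rigorously obtaining exact feature collapse at the minimizer rather than only asymptotic or approximate collapse. Our first attempt will be a convexity/symmetry argument. For fixed weights, the objective restricted to the samples of a single class is strictly convex in those samples' features, because of the $\lambda\|H^{(0)}\|_F^2$ term, and it is symmetric under permuting those samples. Its minimizer is therefore unique and permutation-invariant, which means all samples of the class share the same feature. If that argument runs into trouble, for example at non-global stationary points, we will simply restrict the statement to the global minimizers characterized in \cite{garrod2024persistence} and cite their closed form, in which $W^{(\ell)}$ is explicitly built from the SVD of the class-mean matrix.
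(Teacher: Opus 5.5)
Your proof is correct, but it follows a different route from the paper's. The paper's proof is a one-line citation: it reads the claim off the explicit deep-neural-collapse solution of \cite{garrod2024persistence}, namely the closed form $w^{(\ell)}_j \propto \sum_{c=1}^K O_{cj}\mu^{(\ell)}_c$.

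You instead derive the inclusion from two elementary facts. First, stationarity with weight decay gives $\lambda W^{(\ell)} = -G^{(\ell)}H^{(\ell-1)\top}$ with $G^{(\ell)}=\partial\mathcal{L}/\partial Z^{(\ell)}$, so each row of $W^{(\ell)}$ lies in the column space of $H^{(\ell-1)}$. Second, within-class collapse of $H^{(0)}$, pushed forward linearly, shrinks that column space to the span of the $K$ class means entering the layer.

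The citation buys more structure: the explicit coefficients $O_{cj}$ and the singular-vector form of the weights. It does so only for that particular solution family, which need not be the global optimum of a deep unconstrained-features model once the low-rank bias of weight decay is taken into account.

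Your route buys generality. The first step holds for any layer $z=Wh$ of any network trained to a stationary point with weight decay, since the weight gradient always has the form $\sum_n g_n h_n^\top$. Only the collapse step uses the linear UFM.

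Your fallback to global minimizers is also unnecessary. At any critical point the partial gradient in $H^{(0)}$ vanishes. For fixed weights, the objective in $H^{(0)}$ splits into strictly convex per-sample terms that depend only on the label. That point is therefore the unique minimizer in $H^{(0)}$, and all samples of a class share one feature. The proposition thus holds at every critical point of the regularized deep linear UFM, covering DNC and low-rank solutions alike.

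Finally, your explicit convention that the rows of $W^{(\ell)}$ lie in the span of the class means of $h^{(\ell-1)}$ is the dimensionally consistent reading of the statement. The paper leaves this implicit.
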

Proposition~\ref{prop:layerwise_prototype_span} shows that at each layer,
the trained weights concentrate in a low-dimensional subspace determined by the data. In a wide layer, this naturally leads to a large number of colinear weights within that subspace.

Motivated by Proposition~\ref{prop:layerwise_prototype_span}, we treat densely repeated (high-cosine-similarity) neuron directions as a weight-only proxy for dominant pretraining-era feature directions. Concretely, by grouping colinear neurons and taking the orthogonal complement of their span, we obtain an update subspace intended to suppress interaction with the most salient old-task features as an approximation to Proposition~\ref{prop:no-forgetting-exact}. This yields a data-free route to an \emph{approximately} protected update family with small (but generally nonzero) drift radius $\varepsilon(S)$, and therefore still inherits the nonzero worst-case floor from Theorem~\ref{thm:lower-bound-orth}. In the next section we complement approximate input-side protection with an explicit drift control lens, and later instantiate it via redundant-neuron plasticity.

\section{Low-Curvature Update Families}
\label{sec:maso-selection}
Theorem~\ref{thm:lower-bound-orth} shows that approximate orthogonality alone leaves a non-zero worst-case forgetting floor whenever $\varepsilon(S)>0$.
We now introduce a complementary geometric point of view: the \emph{restricted curvature} of the old-task loss over an update family $S$. This allows us to derive upper bounds on worst case forgetting and understand how one can build an update subspace $S$ that will mitigate forgetting.

\subsection{A local quadratic view of worst-case forgetting}
\label{sec:local-quadratic}

We now consider the old-task loss $L_0$ around $\theta_0$
\begin{equation*}
L_0(\theta_0+\Delta\theta)
\;\approx\;
L_0(\theta_0)+g_0^\top\Delta\theta+\tfrac12\,\Delta\theta^\top H_0\Delta\theta,
\label{eq:local-quadratic}
\end{equation*}
where $g_0=\nabla_\theta L_0(\theta_0)$ and $H_0=\nabla_\theta^2 L_0(\theta_0)$.
For a well-trained model, $g_0\approx 0$ and the quadratic term dominates small-update forgetting, consistent with loss-landscape analyses that connect retention to the geometry of task optima and the widening effect of training regimes \citep{mirzadeh2020understanding}.
For a linear update family subspace $S$, define its \emph{restricted curvature}
\begin{equation*}
\lambda(S)
\coloneqq
\sup_{\Delta\theta\in S,\ \|\Delta\theta\|_2=1}\ \Delta\theta^\top H_0\Delta\theta.
\end{equation*}
To capture worst-case behaviour, define the local worst-case forgetting over the update family subspace $S$
\begin{equation*}
\mathcal{F}_{\max}(S,\rho)
\coloneqq
\sup_{\substack{\Delta\theta\in S\\ \|\Delta\theta\|_2\le\rho}}
\mathcal{F}_0(\theta_0,\theta_0+\Delta\theta).
\end{equation*}

\begin{proposition}[Upper bound via restricted curvature (Proof in Appendix~\ref{proof:generic-upper})]
\label{prop:generic-upper}
$\exists \rho>0$ such that for any linear subspace $S$
\begin{equation*}
    \mathcal{F}_{\max}(S,\rho)\le \tfrac{\lambda(S)}{2}\rho^2 + O(\rho^3).
\end{equation*}
In particular, for unconstrained full fine-tuning,
$\mathcal{F}_{\max}(\mathbb{R}^{\dim(\theta)},\rho)\le \tfrac{\lambda_{\max}}{2}\rho^2 + O(\rho^3)$,
where $\lambda_{\max}$ is the largest eigenvalue of $H_0$.
\end{proposition}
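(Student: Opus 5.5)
The plan is a second-order Taylor expansion of $L_0$ around $\theta_0$, uniform over the ball $\|\Delta\theta\|_2\le\rho$. We need local regularity: $L_0$ is $C^3$, or at least has a locally Lipschitz Hessian, on a neighborhood $\{\theta:\|\theta-\theta_0\|_2\le\rho_0\}$. We also treat $\theta_0$ as a stationary point, $g_0=0$, consistent with the well-trained-model premise stated just before the proposition. Without stationarity the bound can only hold with an extra first-order term $\rho\,\sup_{\Delta\theta\in S,\|\Delta\theta\|=1} g_0^\top\Delta\theta$. The statement as written implicitly assumes $g_0=0$, and we would make this explicit.

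\textbf{Step 1 (uniform remainder).} Fix $\rho\le\rho_0$ and let $M\coloneqq\sup_{\|\theta-\theta_0\|\le\rho_0}\|\nabla^3 L_0(\theta)\|$ (or the Hessian Lipschitz constant). This is finite by compactness and continuity. Taylor's theorem with integral remainder gives, for every $\|\Delta\theta\|_2\le\rho$,
\[
\Bigl|L_0(\theta_0+\Delta\theta)-L_0(\theta_0)-g_0^\top\Delta\theta-\tfrac12\Delta\theta^\top H_0\Delta\theta\Bigr|\le \tfrac{M}{6}\|\Delta\theta\|_2^3\le \tfrac{M}{6}\rho^3 .
\]
The key point is that $\rho_0$ and $M$ depend only on $L_0$ near $\theta_0$, not on $S$. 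This is why a single $\rho$ works for every subspace, and why the $O(\rho^3)$ constant is uniform in $S$.

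\textbf{Step 2 (restricted curvature bound).} Take $\Delta\theta\in S$ with $0<\|\Delta\theta\|_2=r\le\rho$ and write $\Delta\theta=r u$ with $u\in S$, $\|u\|=1$ (using that $S$ is a linear subspace). Then $\Delta\theta^\top H_0\Delta\theta=r^2u^\top H_0u\le r^2\lambda(S)$. Combining with $g_0=0$ and Step 1 gives
\[
\mathcal{F}_0(\theta_0,\theta_0+\Delta\theta)\le \tfrac{r^2}{2}\lambda(S)+\tfrac M6\rho^3 .
\]

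\textbf{Step 3 (sign of $\lambda(S)$ and the supremum).} If $\lambda(S)\ge0$, then $r^2\lambda(S)\le\rho^2\lambda(S)$. If $\lambda(S)<0$ (possible only at a saddle), the right-hand side is at most $\tfrac M6\rho^3$ for every $r\le\rho$. The stated bound $\tfrac{\lambda(S)}2\rho^2+O(\rho^3)$ is then not implied, since the supremum includes $\Delta\theta=0$ with value $0$. In that case one should read it as $\tfrac{\max(\lambda(S),0)}{2}\rho^2$. At a local minimum $H_0\succeq0$, so $\lambda(S)\ge0$ automatically, and we would state that assumption. Taking the supremum over the ball in $S$ yields the claim. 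The full fine-tuning case follows from $\lambda(\mathbb{R}^{\dim\theta})=\lambda_{\max}(H_0)$ by the Rayleigh quotient characterization.

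\textbf{Main obstacle.} The calculus itself is routine. The real difficulty is deciding which hypotheses to add ($g_0=0$, $H_0\succeq0$, local $C^3$) so the statement is literally true and the remainder constant is uniform over all subspaces $S$. Uniformity comes for free because the remainder bound in Step 1 is taken over the whole ball, independent of $S$.
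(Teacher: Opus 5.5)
Your proposal is correct and follows the same route as the paper's proof. Both take a second-order Taylor expansion at $\theta_0$ with the gradient term dropped, bound $\Delta\theta^\top H_0\Delta\theta\le\lambda(S)\|\Delta\theta\|_2^2\le\lambda(S)\rho^2$, and use $\lambda(\mathbb{R}^{\dim(\theta)})=\lambda_{\max}$. Your added hypotheses are the ones the paper's argument uses implicitly: exact $g_0=0$ rather than $g_0\approx 0$, a third-order remainder constant uniform over the ball (hence over $S$), and $\lambda(S)\ge 0$ (e.g.\ $H_0\succeq 0$), which the paper's final inequality silently requires.
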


Proposition~\ref{prop:generic-upper} provides a design principle:
to reduce worst-case forgetting, choose $S$ so that it \emph{removes} high-curvature directions of the old loss.
However, $\lambda(S)$ depends on the restriction of the old-task Hessian $H_0$ to $S$, which is generally intractable to compute and hard to approximate reliably without old-task data. We now show that under some assumptions, $\lambda(S)$ can be controlled by the \emph{first-order}
functional drift radius $\varepsilon(S)$ from Section~\ref{sec:approx-orth}.

\subsection{From curvature to functional drift}
\label{sec:gn-bound}

We now connect the local worst-case forgetting control knob from Section~\ref{sec:local-quadratic}, the
\emph{restricted curvature} $\lambda(S)$, to the \emph{functional drift} that is directly targeted by orthogonality-style constraints. Recall the drift radius $\varepsilon(S)$ from Eq.~\ref{eq:eps-def} measures the worst-case \emph{first-order} change in the model output on old-task inputs induced by a unit-norm update
restricted to $S$.


For well-trained pretrained models, the old-task loss is typically close to stationary on the old distribution, so higher-order effects are weak and the curvature along an update direction is well captured by how much that update changes the model’s outputs on old inputs.

\begin{proposition}[Restricted curvature is bounded by functional drift (Proof in Appendix ~\ref{proof:gn-upper})]
\label{prop:gn-upper}
Assume there exists $\beta>0$ such that $\nabla_f^2 \ell(f_{\theta_0}(x),y)\ \preceq\ \beta I$, then for any linear subspace $S$
\begin{equation*}    
\lambda(S)
\;\le\; \beta\,\varepsilon(S)^2.
\end{equation*}
\end{proposition}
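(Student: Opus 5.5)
We will prove the bound through the Gauss--Newton decomposition of the old-task Hessian. Differentiating $L_0(\theta)=\mathbb{E}_{P_0}[\ell(f_\theta(x),y)]$ twice at $\theta_0$ gives
\[
H_0=\mathbb{E}_{P_0}\!\big[J_{\theta_0}(x)^\top \nabla_f^2\ell(f_{\theta_0}(x),y)\,J_{\theta_0}(x)\big]
+\mathbb{E}_{P_0}\!\Big[\textstyle\sum_k \partial_{f_k}\ell(f_{\theta_0}(x),y)\,\nabla_\theta^2 f_{\theta_0,k}(x)\Big],
\]
that is, $H_0=G_0+R_0$ with $G_0$ the Gauss--Newton term and $R_0$ the residual term involving output-space loss gradients and the model's second derivatives.

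\textbf{Step 1 (Gauss--Newton term).} Fix a unit vector $\Delta\theta\in S$. Pointwise, the assumption $\nabla_f^2\ell\preceq\beta I$ gives
\[
\Delta\theta^\top J^\top\nabla_f^2\ell\, J\,\Delta\theta\le\beta\|J_{\theta_0}(x)\Delta\theta\|_2^2 .
\]
Taking expectations yields
\[
\Delta\theta^\top G_0\Delta\theta\le\beta\|J_{\theta_0}\Delta\theta\|_{L_2(P_0)}^2\le\beta\,\varepsilon(S)^2 ,
\]
where the last inequality is the definition \eqref{eq:eps-def}. Taking the supremum over unit $\Delta\theta\in S$ then bounds the $G_0$ contribution to $\lambda(S)$ by $\beta\varepsilon(S)^2$.

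\textbf{Step 2 (residual term, the main obstacle).} The statement as written holds only if $\Delta\theta^\top R_0\Delta\theta\le 0$ on $S$; in general $R_0$ is indefinite and need not vanish. I would handle this the way the surrounding text suggests ("the old-task loss is typically close to stationary \ldots\ higher-order effects are weak"). The first option is to adopt, explicitly, the Gauss--Newton regime: either a per-sample interpolation/stationarity condition $\nabla_f\ell(f_{\theta_0}(x),y)=0$ on $\mathrm{supp}(P_0)$, so that $R_0=0$ exactly, or the working approximation $H_0\approx G_0$. I will try that first and state it as part of the hypothesis.

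If a more honest statement is wanted, the fallback is to keep the residual and bound it by Cauchy--Schwarz:
\[
|\Delta\theta^\top R_0\Delta\theta|\le \mathbb{E}_{P_0}\big[\|\nabla_f\ell\|_2\,\|\nabla^2_\theta f_{\theta_0}\|_{\mathrm{op}}\big].
\]
This yields $\lambda(S)\le\beta\varepsilon(S)^2+\gamma$, with $\gamma\to0$ as the model approaches per-sample stationarity on $P_0$. Under the interpolation assumption this reduces to the claimed bound.

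\textbf{Step 3 (conclusion).} Combining the two steps and taking the supremum over unit $\Delta\theta\in S$ gives $\lambda(S)\le\beta\,\varepsilon(S)^2$. By Proposition~\ref{prop:generic-upper}, this gives $\mathcal{F}_{\max}(S,\rho)\le\tfrac{\beta}{2}\varepsilon(S)^2\rho^2+O(\rho^3)$. Together with Theorem~\ref{thm:lower-bound-orth}, worst-case forgetting is then sandwiched between two constant multiples of $\rho^2\varepsilon(S)^2$.
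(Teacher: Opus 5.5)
Your proposal is correct and is essentially the paper's proof. The paper also uses the Gauss--Newton split of the per-sample Hessian, the pointwise bound $(J_{\theta_0}(x)v)^\top\nabla_f^2\ell\,(J_{\theta_0}(x)v)\le\beta\|J_{\theta_0}(x)v\|_2^2$, expectation over $P_0$ and a supremum over unit $v\in S$, and it removes the residual $\sum_i\partial_{f_i}\ell\,\nabla_\theta^2 f_{\theta,i}$ by assumption (setting its residual constant $R=0$, justified by idealized zero loss on $P_0$), so your per-sample stationarity hypothesis and additive-$\gamma$ fallback are just sharper statements of that same caveat; one small correction is that $R_0\preceq 0$ on $S$ is sufficient for the bound but not necessary, as ``holds only if'' suggests, since slack in $\nabla_f^2\ell\preceq\beta I$ can absorb a small positive residual.
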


Proposition~\ref{prop:gn-upper} allows us to connect a second-order quantity $\lambda(S)$ on the loss into a first-order one $\varepsilon(S)$ on the network mapping. Specifically, it describes that if an update family produces small first-order output drift on $P_0$, then it cannot expose large old-task curvature.

Combining Proposition~\ref{prop:generic-upper} (worst-case forgetting is controlled by $\lambda(S)$) with
Proposition~\ref{prop:gn-upper} (restricted curvature is controlled by $\varepsilon(S)$) yields the following upper bound on worst case forgetting.

\begin{theorem}[Worst-case forgetting is controlled by functional drift (Proof in Appendix~\ref{proof:drift-upper})]
\label{thm:drift-upper}
There exist $\rho>0$ such that for any linear subspace $S$
\[
\mathcal{F}_{\max}(S,\rho)
\;\le\;
\frac{\beta}{2}\,\varepsilon(S)^2\,\rho^2 + O(\rho^3).
\] 
\end{theorem}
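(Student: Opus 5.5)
The plan is to chain Proposition~\ref{prop:generic-upper} with Proposition~\ref{prop:gn-upper}. The same standing assumptions are in force throughout. These are the loss-Hessian bound $\nabla_f^2\ell(f_{\theta_0}(x),y)\preceq\beta I$ from Proposition~\ref{prop:gn-upper}, and the local smoothness of $L_0$ near $\theta_0$ that makes Proposition~\ref{prop:generic-upper} hold.

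First I take the radius $\rho>0$ supplied by Proposition~\ref{prop:generic-upper}. That radius does not depend on $S$, because the Taylor remainder is controlled by a uniform bound on third derivatives of $L_0$ in a ball around $\theta_0$. For any linear subspace $S$ this gives
\[
\mathcal{F}_{\max}(S,\rho)\le \tfrac{\lambda(S)}{2}\rho^2+O(\rho^3).
\]
I also need the bookkeeping that lets the first-order term disappear. Proposition~\ref{prop:generic-upper} is stated without a gradient term, so it implicitly uses stationarity $g_0=0$, or else absorbs $g_0^\top\Delta\theta$ into the stated regime. I will make that explicit: with $g_0=0$, the expansion $L_0(\theta_0+\Delta\theta)-L_0(\theta_0)=\tfrac12\Delta\theta^\top H_0\Delta\theta+O(\|\Delta\theta\|^3)$ holds. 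For $\Delta\theta\in S$ with $\|\Delta\theta\|\le\rho$, write $\Delta\theta=r u$ with $u$ a unit vector in $S$ and $r\le\rho$. Then the quadratic term is at most $\tfrac{r^2}{2}\lambda(S)\le\tfrac{\rho^2}{2}\lambda(S)$ when $\lambda(S)\ge0$. If $\lambda(S)<0$, the bound holds trivially once the right-hand side is replaced by its positive part, and $\varepsilon(S)^2\ge0$ covers this.

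Second, I substitute $\lambda(S)\le\beta\,\varepsilon(S)^2$ from Proposition~\ref{prop:gn-upper}. This yields
\[
\mathcal{F}_{\max}(S,\rho)\le\tfrac{\beta}{2}\varepsilon(S)^2\rho^2+O(\rho^3).
\]
The main thing to check is that the $O(\rho^3)$ constant stays uniform in $S$. It does, since the remainder bound depends only on the third-derivative bound of $L_0$ over the ball and not on the subspace.

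The main obstacle is a subtlety inside Proposition~\ref{prop:gn-upper}. The exact Hessian is $H_0=G_0+R_0$, the Gauss--Newton part plus a residual term $\mathbb{E}\big[\sum_k \partial_{f_k}\ell\,\nabla^2_\theta f_k\big]$. The clean bound $\lambda(S)\le\beta\varepsilon(S)^2$ requires the residual to vanish on $S$, or to be neglected as the ``well-trained, near-stationary'' heuristic suggests. Since Proposition~\ref{prop:gn-upper} may be assumed, I will simply invoke it. I will also remark that, if the residual is instead controlled by a small constant $\delta$, the bound degrades gracefully to $\tfrac12(\beta\varepsilon(S)^2+\delta)\rho^2$.
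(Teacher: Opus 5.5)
Your proposal is correct and matches the paper's proof: both chain Proposition~\ref{prop:generic-upper} with the substitution $\lambda(S)\le\beta\,\varepsilon(S)^2$ from Proposition~\ref{prop:gn-upper}. Your extra bookkeeping spells out what the paper leaves implicit without changing the argument. That bookkeeping covers the $S$-uniform $O(\rho^3)$ constant, explicit stationarity $g_0=0$, the case $\lambda(S)<0$, and the dropped Gauss--Newton residual.
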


Theorem~\ref{thm:drift-upper} shows that
\emph{to reduce local worst-case forgetting, it suffices to construct update families with small drift radius $\varepsilon(S)$ on $P_0$}. The remaining question is: how can we design such low-drift update families without access to $P_0$?

\subsection{Designing low-drift update families without access to old-task data}
\label{sec:why-lowdrift}
The key ingredient to design such a low-drift update family without having to access the old-task data is redundancy in pretrained layers: deep neural networks are highly compressible, and this redundancy is often expressed as repeated/colinear weight directions and near-duplicate features \citep{you2021max,balestriero2019geometry}. We leverage this on the input side (approximate protection) and the output side (restricting plasticity to redundant channels):
\vspace{-.25cm}
\begin{enumerate}[noitemsep]
\item \textbf{Input-side protection (weight-only approximate data-orthogonality).}
Motivated by the weight-only construction in
Section~\ref{sec:redundancy-datafree} and the drift-based design target in Section~\ref{sec:why-lowdrift}, we build an \emph{approximately protected} input subspace using redundancy:
we group highly colinear neurons (a proxy for dominant pretraining-era feature directions) and use its orthogonal subspace to suppress interaction with those directions.
This mirrors orthogonality-style continual-learning constraints
\citep{zeng2019owm,farajtabar2020ogd,saha2021gpm}, but in a fully data-free, weight-only manner, leveraging redundancy and
compressibility in trained networks.

\item \textbf{Output-side safety (redundant-channel restriction).}
Motivated by the fact that approximate orthogonality alone admits a worst-case forgetting floor
(Section~\ref{sec:approx-orth}) and by the drift/curvature control perspective
(Sections~\ref{sec:local-quadratic}-\ref{sec:gn-bound}), we further concentrate plasticity on a subset of
\emph{highly redundant} neurons. In fact, redundant neurons exhibit colinear hyperplanes and does not induced new partitions in the network input space. This biases updates toward degrees of freedom that are empirically shown to reduce the drift of the network mapping on the pre-trained data, consistent with geometric views introduced in
\citep{balestriero2019geometry,you2021max} and broader evidence of redundancy in overparameterized trained networks
\citep{han2015deep,han2015learning,frankle2019lottery,hoefler2021sparsity}. 
\end{enumerate}
Together, these two restrictions define a structured, low-dimensional update family that is explicitly designed to reduce old-task drift $\varepsilon(S)$ using only pretrained weights. We will see empirically (Section~\ref{sec:mechanism}, with full sweeps in Appendix~\ref{app:ablations}) that these two ingredients play asymmetric roles: the input-side construction of $Q$ is the dominant driver of \emph{retention} (replacing it with a random orthonormal basis at the same parameter count increases both old-distribution KL and Task-1 forgetting), while the output-side selection of $B$ primarily controls \emph{where} the plasticity budget is spent and has only a mild effect on retention. Complementary restricted-curvature evidence is provided in Appendix~\ref{app:worst-case}, Figure~\ref{fig:restricted-curvature}. 

\section{PLATE: Plasticity-Tunable Efficient Adapters}
\label{sec:plate}
We now instantiate these principles in a parameter-efficient adapter-based method that is fully data-free with respect to
the old distribution $P_0$. The resulting method, \textsc{PLATE} (\textbf{Pla}sticity-\textbf{T}unable \textbf{E}fficient Adapters),
constructs for each adapted linear map a structured update family that is designed to reduce $\varepsilon(S)$ using only frozen
pretrained weights: $(i)$ it concentrates plasticity on a subset of \emph{redundant} output channels and $(ii)$ it restricts updates to a \emph{low-energy} input subspace inferred from the remaining frozen weights. 

Specifically, for each layer $\ell$, PLATE defines a linear update family
\vspace{-.2cm}
\begin{equation*}
\label{eq:plate-form}
S^{(\ell)}_{\mathrm{PLATE}}
\coloneqq \{\; B^{(\ell)}A^{(\ell)}Q^{(\ell)\top} \;:\; A^{(\ell)}\in\mathbb{R}^{r\times k}\;\},
\end{equation*}
where $B^{(\ell)}\in\mathbb{R}^{d_{\mathrm{out}}\times r}$ is an \emph{index-selection matrix} (a column submatrix of the identity)
that selects $r$ trainable (redundant) output channels,
$Q^{(\ell)}\in\mathbb{R}^{d_{\mathrm{in}}\times k}$ spans a low-energy input subspace, and only $A^{(\ell)}$
is learned. Similar to LoRA and other PEFT methods, PLATE adds an adapter to the frozen modules that are selected for fine-tuning.
\begin{equation*}
\label{eq:plate-update}
W' \;=\; W + \rho \Delta W,
\qquad
\Delta W \;=\; B A Q^\top .
\end{equation*}
PLATE adapter adds a rank$\le\min(r,k)$ matrix to the existing frozen weight with $rk$ trainable parameters. The pseudo-code is described in Algorithm~\ref{alg:plate} and its computational complexity is provided in Appendix~\ref{sec:complexity} (Figure~\ref{fig:compute} for computational comparison with LoRA on DistilBERT fine-tuning).

\subsection{Algorithm overview}
\label{sec:plate-alg}

PLATE consists of a one-time, weight-only preprocessing step per layer to compute $(B,Q)$, followed by standard adapter training on the
new-task data with only $A$ trainable. The key point is that both ingredients are computed \emph{ once per model and without access to $P_0$}. In Figure~\ref{fig:init_complex} we provide details regarding the complexity of PLATE initialization.

\begin{algorithm}[t]
\caption{PLATE}
\label{alg:plate}
\begin{algorithmic}[1]
\REQUIRE Pretrained model parameters $\theta_0$, target linear layers $\mathcal{L}$, number of selected output neurons $r$, energy threshold for input subspace $\tau\in(0,1)$, scaling $\rho$
\FOR{each layer $\ell\in\mathcal{L}$ with weight $W^{(\ell)}\in\mathbb{R}^{d_{\mathrm{out}}\times d_{\mathrm{in}}}$}
    \STATE \textbf{Redundant-output selection:} choose indices $\mathcal{I}^{(\ell)}$ of cardinal $r$ using neuron similarity measure (Section~\ref{sec:plate-B})
    \STATE Form $B^{(\ell)}=[e_i]_{i\in\mathcal{I}^{(\ell)}}\in\mathbb{R}^{d_{\mathrm{out}}\times r}$, where $e_i$ denotes the $i$-th standard basis vector in $\mathbb{R}^{d_{\mathrm{out}}}$

    \STATE Let $W^{(\ell)}_{\mathrm{frozen}}$ be the submatrix of rows \emph{not} in $\mathcal{I}^{(\ell)}$
    \STATE \textbf{Low-energy input basis:} compute $Q^{(\ell)}\in\mathbb{R}^{d_{\mathrm{in}}\times k}$ from $W^{(\ell)}_{\mathrm{frozen}}$ (Section~\ref{sec:plate-Q}), where $k$ is chosen so the \emph{complementary} high-energy subspace captures a $\tau$ fraction of the estimated energy
    \STATE Initialize trainable $A^{(\ell)}\in\mathbb{R}^{r\times k}$ (zero matrix)
    \STATE Define the adapted layer by $W'^{(\ell)} = W^{(\ell)} + \rho B^{(\ell)}A^{(\ell)}Q^{(\ell)\top}$
\ENDFOR
\STATE Train only $\{A^{(\ell)}\}_{\ell\in\mathcal{L}}$ on the new-task data (all $W^{(\ell)},B^{(\ell)},Q^{(\ell)}$ frozen)
\end{algorithmic}
\end{algorithm}

PLATE exposes two main hyperparameters that provide an explicit trade-off between learning and forgetting: $(i)$ $r$ the number of adapted (redundant) output neurons, which controls the \emph{plasticity budget}. Increasing $r$ typically improves learnability on the new task but can increase old-task forgetting as it increase the number of learnable neurons and therefore affect  $\varepsilon(S)$, $(ii)$ $\tau$ an input energy threshold controlling the size of the orthogonal subspace, $k$. That is, $\tau$ controls how conservatively PLATE restricts updates to redundant degrees of freedom. Note that \emph{larger} $\tau$ makes the constraint more stringent and induces a \emph{smaller} $k$, because $k$ is chosen as the \emph{smallest} dimension such that the complementary (high-energy) subspace captures a $\tau$ fraction of the spectrum/energy of $W_{\mathrm{frozen}}$.

It is important to note that the additional flexibility PLATE offers in navigating the learning-forgetting trade-off comes at the cost of more involved engineering choices than simpler PEFT methods such as LoRA. Also, one should note that the number of learnable parameters for LoRA and PLATE scale differently from the rank $r$. In fact, LoRA learnable parameters per layer are $2rd$ while for PLATE, there is no dependency on the number of hidden dimension, that is, the number of learnable parameters is $rk$. This is a crucial difference as in PLATE one can increase the rank while not drastically increasing the number of learnable parameters as the hidden dimension $d$ is usually large in LLMs. 

\vspace{-.3cm}
\FloatBarrier
\section{Experiments}
\label{sec:experiments}
\vspace{-.2cm}
In this section, we provide the experimental results for PLATE. We evaluate PLATE in two complementary regimes:
{\setlength{\topsep}{1pt}\setlength{\itemsep}{1pt}\setlength{\parskip}{0pt}\setlength{\parsep}{0pt}
\begin{enumerate}
    \item \textbf{Out-of-distribution forgetting in LLMs}: we adapt pretrained LLMs on new reasoning/instruction-following dataset and probe both forgetting and learnability on \emph{separate} benchmarks that were not used in training (Section~\ref{sec:exp_llm_ood}). Training performed with open-instruct \footnote{https://github.com/allenai/open-instruct} and evaluation with OLMES \cite{gu2025olmesstandardlanguagemodel}.
    \item \textbf{In-distribution forgetting in controlled benchmarks}: we build standard two-task continual-learning setups in vision, regression, and text modalities where the evaluation distribution coincides with a known training distribution (Section~\ref{sec:exp_indist}). For these experiments we follow the conventional two-stage continual-learning protocol described in Algorithm~\ref{alg:general_protocol}.
\end{enumerate}
For the out-of-distribution LLM experiments, we compare LoRA and PLATE. Across all in-distribution experiments, we compare Full FT (all weights trainable), LoRA, O-LoRA \citep{wang2023orthogonal}, L2-Init \citep{kumar2023maintaining}, and PLATE. O-LoRA is a replay-free orthogonality-based PEFT method, while L2-Init regularizes all parameters toward their pre-task-2 values. Forgetting is harder to anticipate in OOD LLM experiments than in in-distribution settings: fine-tuning corpora overlap heterogeneously with the pretraining mixture, so it is rarely clear which capabilities are reinforced versus overwritten, and retention can shift qualitatively with the base model or optimization setup. We tune LoRA rank $r$ ($\alpha/r=0.5$) and PLATE $(r,\tau)$ at fixed $\rho = 0.5$ (LoRA-equivalent scaling); grids and architectures are in Tables~\ref{tab:method_hyperparams} and~\ref{tab:experimental_configs}.

Note that a condensed account of the assumptions and known failure modes (Gauss--Newton residual, weight-space vs.\ activation alignment, behavior under aggressive compression, and short-versus-long task streams) is provided in Appendix~\ref{app:scope-limitations}.

\subsection{Out-of-distribution forgetting in LLM specialization}
\label{sec:exp_llm_ood}
We first consider the LLM setting where the pretraining distribution $P_0$ is unknown and inaccessible, and we evaluate forgetting on OOD benchmarks to analyze how much fine-tuning affect the generalization capabilities of the model.

\subsubsection{Qwen2.5-7B specialized to DeepSeek-R1 reasoning dataset}
\label{sec:exp_deepseekr1}
\begin{figure}[h]
  \centering
  \includegraphics[width=.9\linewidth]{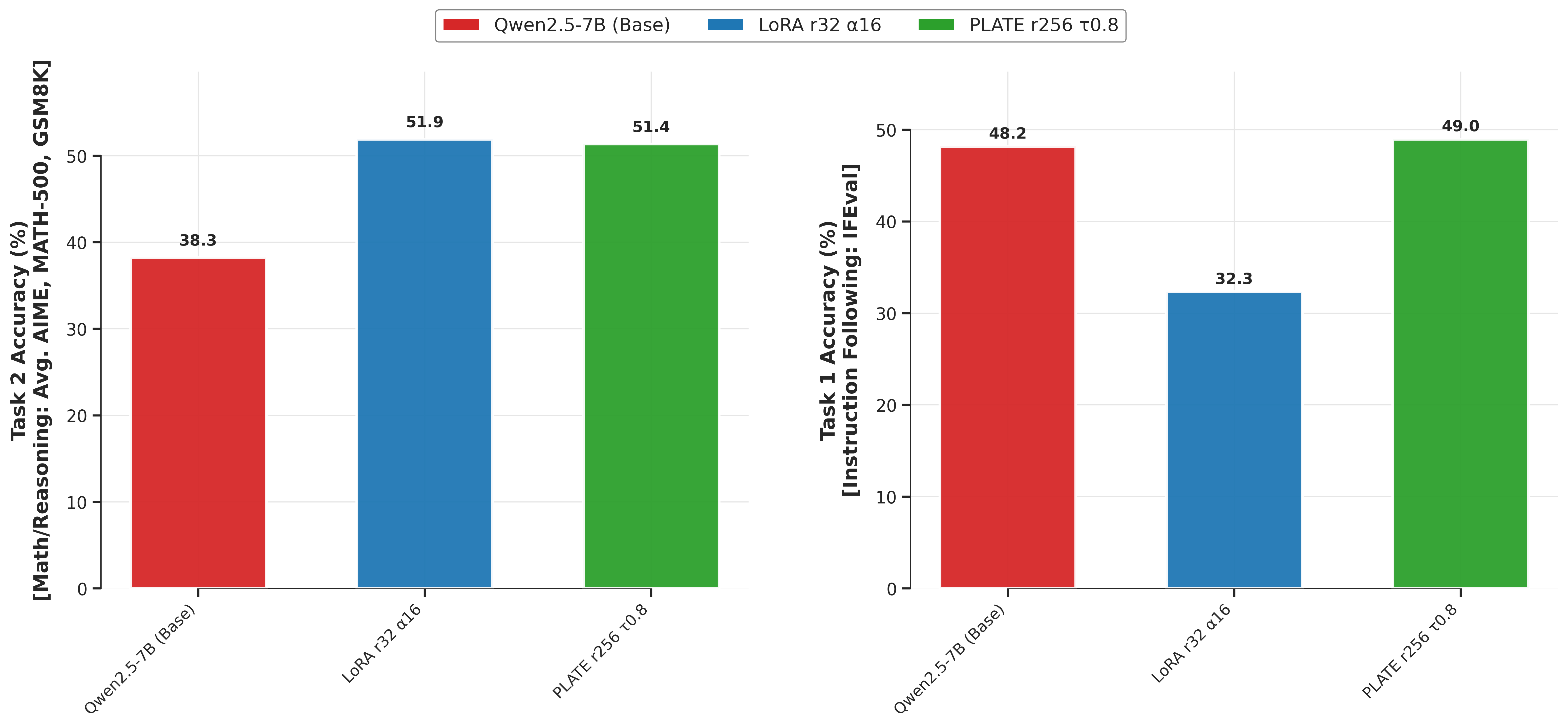}
    \vspace{-.25cm}
  \caption{\textbf{Qwen2.5-7B on DeepSeek-R1 reasoning:} (\textbf{Left}) Learning capabilities on maths/reasoning dataset. (\textbf{Right}) Forgetting on instruction following dataset. PLATE (green) matches LoRA (blue) on math/reasoning benchmarks while preserving instruction-following (IFEval), whereas LoRA exhibits substantial OOD forgetting relative to the base model.}
  \label{fig:plate_deepseekr1}
    \vspace{-.35cm}
\end{figure}
We start with Qwen2.5-7B \cite{qwen2.5} and fine-tune on it the AM-DeepSeek-R1 distilled reasoning corpus (1 epoch, learning rate $10^{-4}$, AdamW).  
We compare: (1) base model, (2) LoRA (rank $32$), and (3) PLATE (rank $256$), adapters are attached to all modules. We evaluate $(i)$ \textbf{OOD learning} on math/reasoning benchmarks (AIME, GSM8K, MATH-500); and $(ii)$ \textbf{OOD forgetting} on IFEval, which probes instruction-following capabilities acquired during pretraining. Figure~\ref{fig:plate_deepseekr1} shows that PLATE matches LoRA's $\approx$+13 point gain on math datasets while essentially eliminating the $\approx$16 point drop that LoRA incurs on IFEval.

\subsubsection{OLMo-2-7B Specialized to Tulu-3 dataset}
\label{sec:exp_olmo_tulu}
\begin{figure}[htpb]
  \centering
  \includegraphics[width=.9\linewidth]{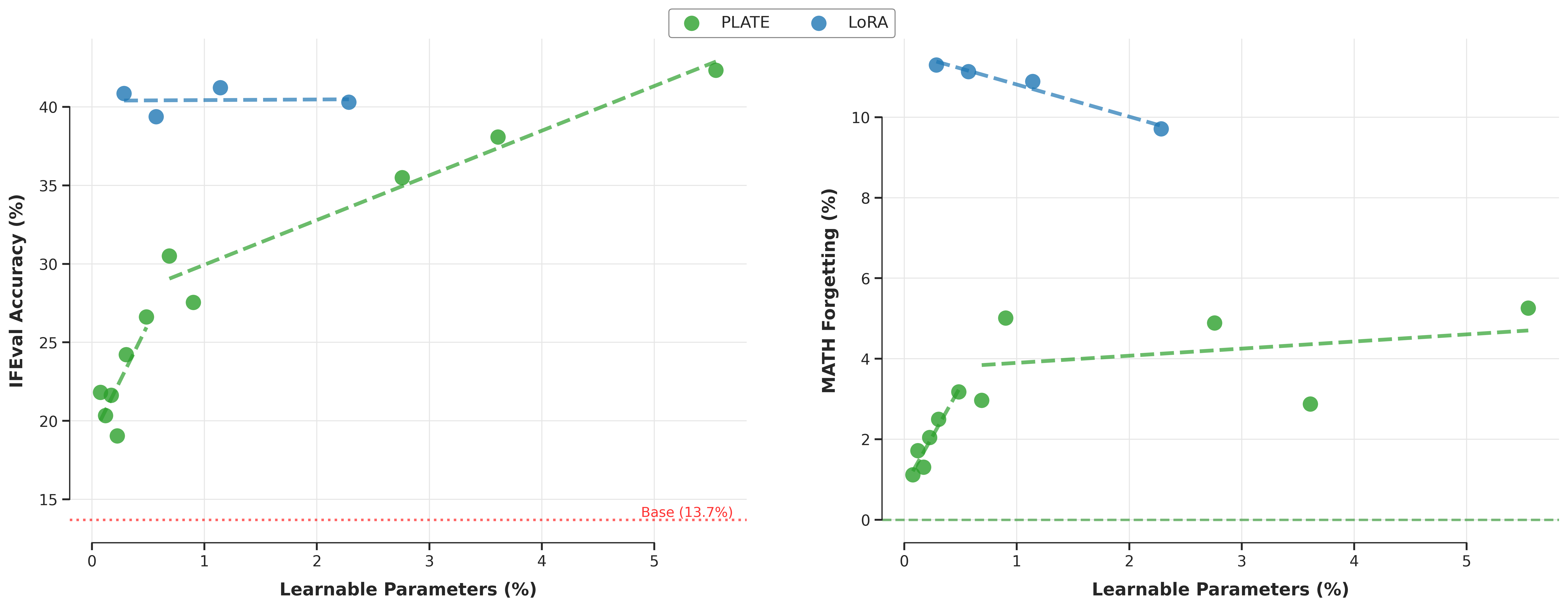}
  \caption{\textbf{OLMo-2-7B on Tulu-3:}
  (\textbf{Left}) IFEval accuracy vs.\ percentage of trainable parameters. The red dashed line is the base model capabilities on IFEval.  
  (\textbf{Right}) MATH forgetting (drop from base) versus trainable parameters.  
  PLATE (green) improves IFEval roughly linearly with parameter budget while keeping forgetting almost flat, whereas LoRA (blue) quickly saturates on IFEval and accumulates much larger MATH forgetting. }
  \label{fig:tulu_tradeoff}
\end{figure}

We then study how the percentage of learnable parameters affect both learning and forgetting capabilities. The base model is OLMo-2-7B \cite{olmo20252olmo2furious}, we perform supervised fine-tuning on the Tulu-3 SFT OLMo-2 mixture ($10\%$ subsample, OLMo chat template, 1 epoch, learning rate $10^{-4}$, AdamW). We again attach adapters to all module. For LoRA we sweep $r \in \{8,16,32,64\}$ and for PLATE we sweep the following configuration $r\in\{32,128,512,1024\}$,
$\tau\in\{0.8,0.9,0.95,0.98\}$.

We measure \textbf{OOD learnability} as IFEval accuracy and \textbf{OOD forgetting} as the drop in MATH-$500$ performance relative to the base model. Although the Tulu-3 mixture contains reasoning and maths problems, MATH-$500$ is still severely affected by forgetting, reinforcing that OOD retention is hard to predict from the fine-tuning corpus alone. Figure~\ref{fig:tulu_tradeoff} shows that PLATE can navigate the learning--forgetting trade-off by varying its number of learnable parameters, with forgetting nearly plateauing as that number grows; in the low-budget range LoRA outperforms PLATE on learnability. LoRA, by contrast, is relatively insensitive to its rank in this setting: performance changes only modestly across the sweep and forgetting remains hard to avoid, which explains its appeal as a robust default while also clarifying that meaningful control over the retention--plasticity trade-off requires methods like PLATE.

\subsection{In-distribution forgetting benchmarks}
\label{sec:exp_indist}
We now consider settings where the task~1 distribution is known and forgetting can be measured directly on task~1  data after adapting to task~2.
\subsubsection{Language modeling: WikiText-2 \texorpdfstring{$\to$}{→} Middle English}
\label{sec:exp_middle_english}

\begin{figure}[h]
  \centering
  \includegraphics[width=\linewidth]{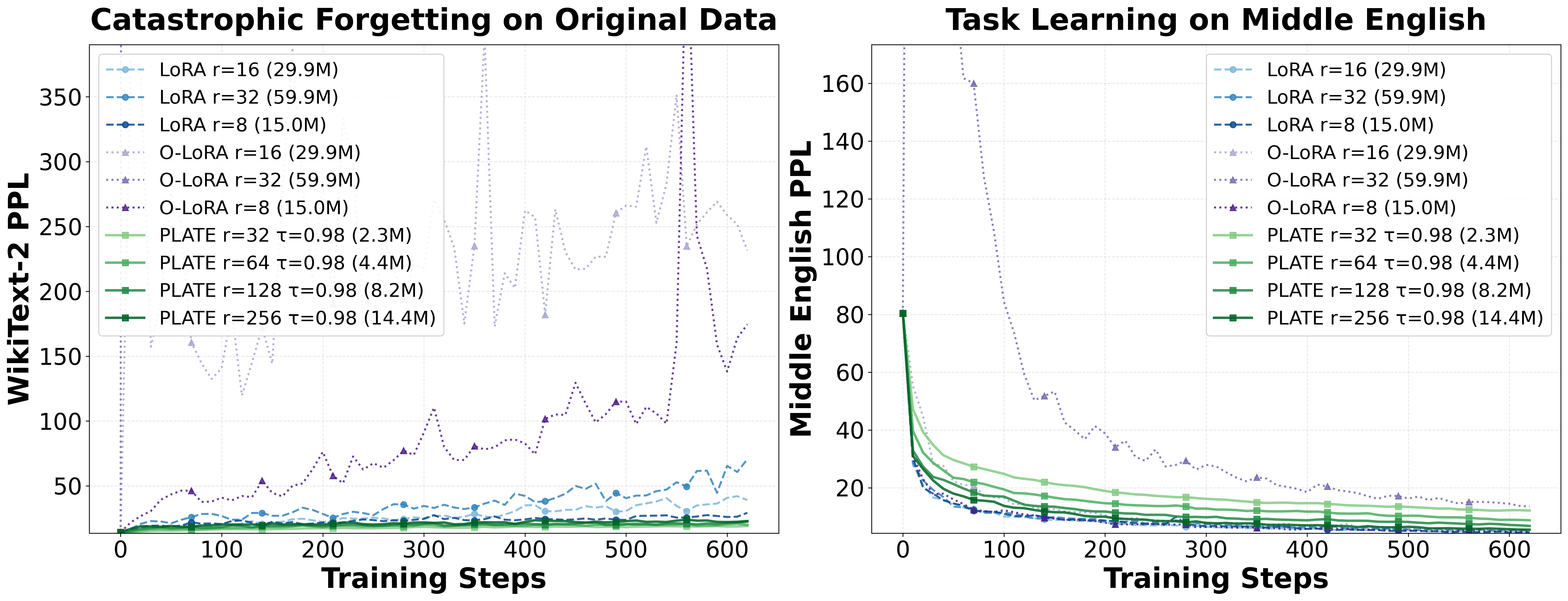}
  \caption{\textbf{Qwen 2.5-3B on Middle English (EN-ME):} Perplexity over training steps on WikiText-2 \textbf{(left: retention/forgetting)} and EN-ME \textbf{(right: learning)}. LoRA runs (blue, dashed) vary rank, while PLATE runs (green, solid) vary rank with $\tau=0.98$. We observe again the capability of PLATE to alleviate catastrophic forgetting while allowing for adaptable learning regimes. Additional sweeps over $(r,\tau)$ are shown in Appendix \ref{ap:additional_sweep} Figure~\ref{fig:llm_param_sweep_per_rank}.}
  \label{fig:llm_param_sweep_tau098}
\end{figure}
We now study continual adaptation in a language generation task. We start from a pretrained Qwen 2.5-3B model and fine-tune it on the Middle English dataset (EN-ME) for one epoch, comparing LoRA (varying rank) and PLATE (varying rank with fixed $\tau=0.98$). We measure \textbf{learnability} as perplexity on EN-ME, and \textbf{retention} as perplexity on WikiText-2 as a proxy for general-domain pretraining behavior.
Figure~\ref{fig:llm_param_sweep_tau098} summarizes the resulting learning--retention trade-off, again showing that PLATE consistently mitigates catastrophic forgetting, while LoRA offers only limited control over retention in this setting. The orthogonality-based O-LoRA baseline, despite sharing the same parameter budget as LoRA at every rank, exhibits $10$--$100\times$ larger forgetting on WikiText-2: under this large domain shift the orthogonal re-initialization actively destabilizes the original representations rather than protecting them. L2-Init diverged across all regularization strengths $\lambda$ we tested on the 3B model, alternating between catastrophic forgetting and failure to learn the new task, and is therefore omitted from the figure. Additional sweeps over $(r,\tau)$ allowing to better capture the retention--plasticity spectrum PLATE provides are shown in Figure~\ref{fig:llm_param_sweep_per_rank}. We now turn to controlled settings where the task~1 distribution is known and forgetting can be measured exactly. For these experiments, we train a randomly initialized model on task~1, and then fine-tune it on task~2 (Table~\ref{tab:experimental_configs}).
\subsubsection{Vision:  MNIST 0-4 \texorpdfstring{$\to$}{→} 5-9}
\label{sec:exp_mnist}

\begin{figure}[h]
  \centering

  \begin{minipage}{0.98\linewidth}
    \centering
    \includegraphics[width=\linewidth]{figures/PLATE_mnist_parameter_sweep_comparison_complete.png}
  \end{minipage}

  \begin{minipage}{0.98\linewidth}
    \centering
    \includegraphics[width=\linewidth]{figures/PLATE_parameter_sweep_text_comparison_complete.png}
  \end{minipage}
  \caption{\textbf{In distribution Vision and text benchmarks:}
  (\textbf{Top}): MNIST 0-4$\to$5-9, showing task~2 performance and task~1 forgetting as a function of trainable parameters.
  (\textbf{Bottom}): AG News$\to$IMDB, with all methods achieving near-perfect task~2 accuracy while differing in how much they forget task~1.}
  \label{fig:mnist_text_tradeoff}
\end{figure}
For this experiment, task~1 is a classification problem on MNIST digits $\{0,\dots,4\}$, and task~2 is the classification on MNIST digits $\{5,\dots,9\}$ using a shared 3-layer ReLU MLP backbone  (Table~\ref{tab:experimental_configs}).  We sweep LoRA ranks $r\in\{1,8,16,32,64,128\}$ and PLATE ranks $r\in\{32,64,128,256,350\}$ (with $\tau=0.8$), all applied to the backbone layers. Figure~\ref{fig:mnist_text_tradeoff} (top) summarizes the results with respect to the $\%$ of learnable parameters. All methods reach $\approx 98\%$ task~2 accuracy once they have at least a few percent of the backbone parameters as trainable.
The key difference lies in task~1 retention: full fine-tuning forgets about $26\%$ of task~1 accuracy despite excellent task~2 performance; LoRA reduces forgetting to $\approx 7$-$9\%$. PLATE, by contrast, achieves near-full retention: at $10.2\%$ trainable parameters it reaches $98.28\%$ task~2 accuracy and $97.45\%$ task~1 retention, i.e.\ only $1.85\%$ forgetting, more than $4\times$ better than LoRA at similar capacity. The two replay-free baselines occupy opposite ends of the trade-off: L2-Init matches PLATE's task-1 retention ($\sim 98\%$) but only at $100\%$ trainable parameters (no parameter efficiency), while O-LoRA, despite sharing the same per-rank budget as LoRA, consistently forgets more, with retention degrading from $\sim 88\%$ at $r{=}8$ down to $\sim 76\%$ at $r{=}128$ and converging toward full fine-tuning behavior as rank grows. PLATE is the only method that simultaneously achieves near-full retention and high parameter efficiency.

\subsubsection{Text classification: AG News \texorpdfstring{$\to$}{→} IMDB}
\label{sec:exp_text}

Finally we revisit the text modality in the \emph{in-distribution} sense: we know both training and evaluation distributions for AG News (task~1) and IMDB (task~2) and can measure forgetting exactly. We pretrain DistilBERT-base \cite{sanh2020distilbertdistilledversionbert} on AG News (3 epochs, learning rate $2\times10^{-5}$) obtaining a baseline task~1 accuracy of $91.34\%\pm0.12\%$, then adapt to IMDB with: full fine-tuning; LoRA with $r\in\{1,8,16,32,64,128\}$; and PLATE with fixed rank $r=32$, thresholds $\tau\in\{0.6,0.7,0.8,0.9\}$.

Figure~\ref{fig:mnist_text_tradeoff} (bottom) shows that all methods reach $100\%$ IMDB accuracy, so the trade-off is purely about task~1 retention. Full fine-tuning incurs about $3\%$ forgetting. LoRA displays a rank-dependent trade-off: a rank-1 configuration achieves zero forgetting, but forgetting rises steadily to $\approx 2$-$3\%$ as rank increases.  
PLATE, by contrast, keeps forgetting below $0.5\%$ across all configurations. On this ``free lunch'' dataset where all adapter methods reach $100\%$ task-2 accuracy, L2-Init is the only baseline that fails to learn task~2 (capped at $\sim 91.3\%$) while also suffering the worst task-1 retention, indicating that its uniform regularization is too rigid for the AG News$\to$IMDB distribution shift. O-LoRA tracks LoRA closely on this dataset, suggesting that the orthogonal-update prior provides no advantage in the small-shift regime. Note that additional experiments are provided in Appendix~\ref{app:add_experiments}.

\subsection{Mechanism analysis: direct drift control and the role of $B$ vs $Q$}
\label{sec:mechanism}
The previous experiments confirm that PLATE retains better than LoRA and the replay-free baselines, but they do not directly verify the geometric mechanism predicted by our analysis. Theorem~\ref{thm:drift-upper} states that worst-case forgetting is controlled by the functional drift radius $\varepsilon(S)$ on $P_0$; our two-side construction is designed to make $\varepsilon(S)$ small in a data-free way, with the input-side basis $Q$ as the dominant lever and the output-side selector $B$ controlling where plasticity is spent. We now verify both claims directly.

\begin{figure}[h]
  \centering
  \includegraphics[width=\linewidth]{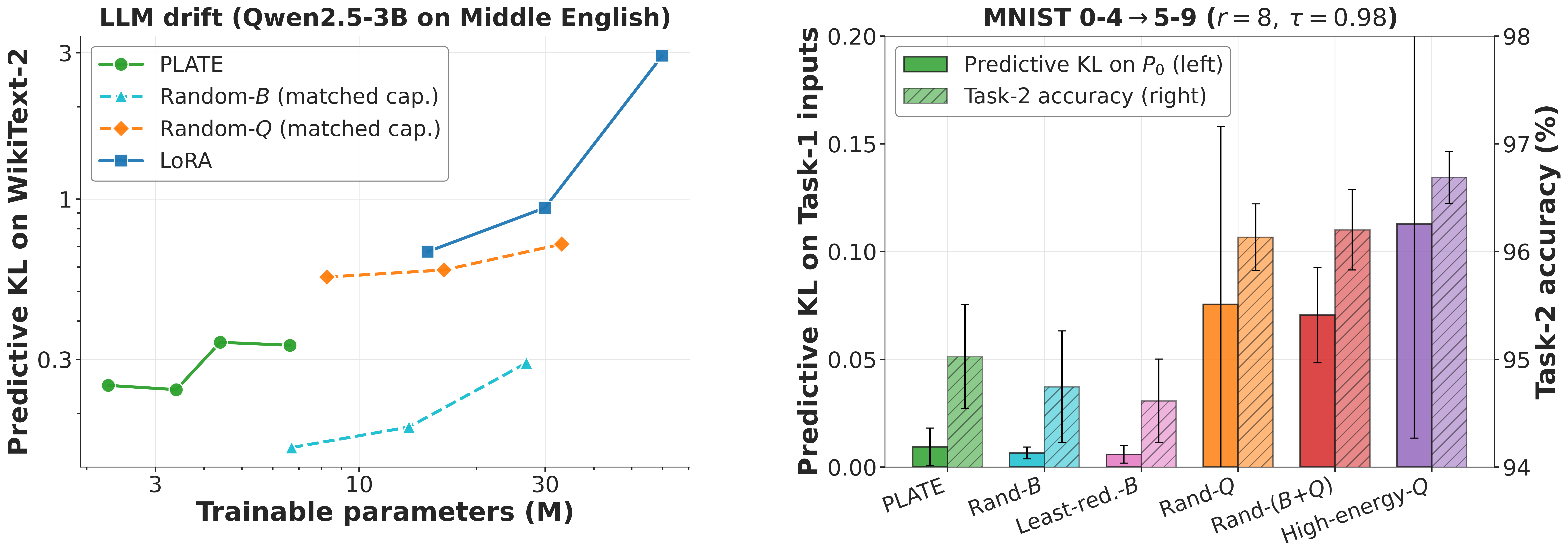}
  \caption{\textbf{Direct evidence for the drift-based mechanism.}
  \textbf{(Left)} Predictive KL between the base and adapted Qwen2.5-3B on WikiText-2 inputs as a function of trainable parameters, after Middle English adaptation. PLATE (green) achieves the lowest KL across the entire parameter range; LoRA exhibits up to an order of magnitude larger drift at comparable or larger budgets. Matched-capacity Random-$Q$ adapters (orange) shift PLATE upward by $2$-$3\times$, while matched-capacity Random-$B$ adapters (cyan) track PLATE closely.
  \textbf{(Right)} Matched-capacity controls on MNIST 0-4$\to$5-9 at the canonical config $(r{=}8, \tau{=}0.98)$, averaged over 5 seeds: predictive KL on Task-1 inputs (solid, left axis) and Task-2 accuracy (hatched, right axis). Randomizing $Q$ (Rand-$Q$, Rand-$(B{+}Q)$, High-energy-$Q$) drives the KL up by $8$-$12\times$. Among the $Q$-preserving variants, PLATE's redundant-channel $B$ yields the highest Task-2 accuracy, confirming that $B$ controls \emph{where} the plasticity budget is spent even though it does not affect retention.}
  \label{fig:main-drift-evidence}
\end{figure}

\textbf{Direct drift evidence (LLM).} Figure~\ref{fig:main-drift-evidence} (left) measures the symmetrized  KL between the base and adapted Qwen2.5-3B, averaged token-wise over the WikiText-2 evaluation set; for the MNIST panel (right) the same quantity is computed on the model's class logits over Task-1 inputs. This output-space drift is a legitimate empirical proxy for $\varepsilon(S)$: to leading order in $\Delta\theta$, predictive KL scales with $\|J_{\theta_0}\Delta\theta\|_{L_2(P_0)}^2$, so a small KL on the pretraining-era inputs directly bounds the worst-case forgetting via Theorem~\ref{thm:drift-upper}. PLATE reaches KL $\approx 0.24$ with $3.4$M trainable parameters; LoRA requires $4{-}10\times$ more parameters and still exhibits $3{-}10\times$ larger KL. This is a direct functional translation of the perplexity results in Section~\ref{sec:exp_middle_english}: PLATE's retention gain on WikiText-2 reflects measurably smaller output drift on the old distribution.

\textbf{Mechanism: $Q$ drives retention, $B$ allocates plasticity.} Figure~\ref{fig:main-drift-evidence} (right) isolates each of PLATE's two design choices via matched-capacity controls. Replacing the weight-derived input basis $Q$ by a random orthonormal $k$-frame (Rand-$Q$) or by the top (rather than bottom) eigenvectors of $G_{\mathrm{in}}$ (High-energy-$Q$) increases the predictive KL on $P_0$ from $0.009$ to $0.076{-}0.113$ ($8{-}12\times$) and the Task-1 forgetting from $0.20\%$ to $1.5{-}2.4\%$, while keeping every other element of the architecture and parameter count identical. In contrast, replacing the redundant-output selector $B$ by a uniformly random subset (Rand-$B$) or by the \emph{least} redundant outputs (Least-red.-$B$) leaves the KL and Task-1 forgetting essentially unchanged ($0.006{-}0.007$ and $\approx 0.19\%$, respectively), but it does have a small and consistent effect on \emph{new-task} learnability: at the same fixed $Q$ and parameter count, PLATE's redundant-channel $B$ reaches $95.03\%$ Task-2 accuracy versus $94.75\%$ for Rand-$B$ and $94.61\%$ for Least-red.-$B$. The same asymmetry is visible on the LLM side: Random-$Q$ shifts the drift curve substantially upward at every parameter budget, while Random-$B$ stays close to PLATE. This empirically separates the two roles predicted by the theory: the input-side construction of $Q$ is the dominant driver of retention (it controls $\varepsilon(S)$), while the output-side selection of $B$ controls how the plasticity budget is allocated, it does not affect retention, but it does mildly improve Task-2 adaptation. Extended ablations (full sweep, additional controls, and per-layer breakdown) are provided in Appendix~\ref{app:ablations}.


  \vspace{-.15cm}
\section{Discussion and Conclusion}
\label{sec:discussion}

We studied catastrophic forgetting in parameter-efficient adaptation of foundation models in the practically relevant regime where the pretraining distribution $P_0$ is unavailable and replay is impractical. Our analysis identifies \emph{functional drift on old inputs} as the key geometric quantity governing worst-case forgetting: if an update family permits nonzero drift on $P_0$, it necessarily contains directions that incur a nontrivial forgetting floor, while controlling drift also controls restricted curvature and yields an upper bound on worst-case forgetting. This suggests a simple design principle: build update families that keep output drift on $P_0$ small. We instantiate this principle with \textsc{PLATE}, a structured PEFT adapter $\Delta W = BAQ^\top$ that is fully data-free with respect to $P_0$:
 it restricts updates to a weight-derived low-energy input subspace and concentrates plasticity on redundant output channels. LoRA remains appealing as a simple and robust default; however, when retention is a first-class constraint, methods with finer control over forgetting are needed. PLATE is designed for this setting: it matches LoRA on new-task gains while improving retention across continual-learning benchmarks and LLM specialization, and it exposes explicit retention--plasticity control via $(r,\tau)$. More broadly, the drift--curvature link of Theorem~\ref{thm:drift-upper}, together with its empirical validation in Section~\ref{sec:mechanism}, suggests a method-agnostic evaluation lens: any new continual-learning method, whether it modifies $B$, $Q$, or neither, can be assessed by how well it controls $\varepsilon(S)$ on the old distribution, independently of its parameter count or architectural family.


\newpage
\section*{Impact Statement}
This paper presents work whose goal is to advance the field of Machine
Learning. There are many potential societal consequences of our work, none
which we feel must be specifically highlighted here.

\bibliography{example_paper}
\bibliographystyle{icml2026}

\newpage
\appendix
\onecolumn

\section{Proofs}

\subsection{Proof of Proposition~\ref{prop:no-forgetting-exact}}
\label{proof:no-forgetting-exact}

\begin{proof}
Recall that for each layer $\ell$ we have
\[
z_\theta^{(\ell)}(x) = W^{(\ell)} h_\theta^{(\ell-1)}(x),
\qquad
h_\theta^{(\ell)}(x) = \sigma\!\big(z_\theta^{(\ell)}(x)\big),
\]
and that $\theta_1 = \theta_0 + \Delta\theta$ with
$\Delta\theta = \{\Delta W^{(\ell)}\}_\ell$.

We prove by induction on $\ell$ that for every $x\in\mathrm{supp}(P_0)$,
\[
h_{\theta_1}^{(\ell)}(x) = h_{\theta_0}^{(\ell)}(x).
\]

\paragraph{Base case ($\ell=1$).}
By definition,
\[
z_{\theta_1}^{(1)}(x)
\;=\;
(W^{(1)} + \Delta W^{(1)})\,h_{\theta_1}^{(0)}(x)
\;=\;
(W^{(1)} + \Delta W^{(1)})\,x.
\]
Using the per-neuron orthogonality condition~\eqref{eq:per-sample-orth}
with $h_{\theta_0}^{(0)}(x) = x$ gives $\Delta W^{(1)} x = 0$, hence
\[
z_{\theta_1}^{(1)}(x)
\;=\;
W^{(1)} x
\;=\;
z_{\theta_0}^{(1)}(x),
\]
and therefore $h_{\theta_1}^{(1)}(x) = h_{\theta_0}^{(1)}(x)$.

\paragraph{Induction step.}
Assume that for some $\ell \ge 1$ we have
$h_{\theta_1}^{(\ell-1)}(x) = h_{\theta_0}^{(\ell-1)}(x)$ for all
$x\in\mathrm{supp}(P_0)$. Then
\begin{align*}
z_{\theta_1}^{(\ell)}(x)
&=
(W^{(\ell)} + \Delta W^{(\ell)})\,h_{\theta_1}^{(\ell-1)}(x) \\
&=
(W^{(\ell)} + \Delta W^{(\ell)})\,h_{\theta_0}^{(\ell-1)}(x) \\
&=
W^{(\ell)} h_{\theta_0}^{(\ell-1)}(x)
\;+\;
\Delta W^{(\ell)} h_{\theta_0}^{(\ell-1)}(x).
\end{align*}
By the per-neuron orthogonality condition~\eqref{eq:per-sample-orth},
$\Delta W^{(\ell)} h_{\theta_0}^{(\ell-1)}(x) = 0$ for all
$(x,y)\sim P_0$, so
\[
z_{\theta_1}^{(\ell)}(x)
=
W^{(\ell)} h_{\theta_0}^{(\ell-1)}(x)
=
z_{\theta_0}^{(\ell)}(x).
\]
Thus $h_{\theta_1}^{(\ell)}(x) = h_{\theta_0}^{(\ell)}(x)$.

In particular, the final layer outputs coincide for all
$x\in\mathrm{supp}(P_0)$:
\[
f_{\theta_1}(x) = f_{\theta_0}(x),
\]
which implies $L_0(\theta_1) = L_0(\theta_0)$ and therefore
$\mathcal{F}_0(\theta_0,\theta_1) = 0$.
\end{proof}

\subsection{Curvature assumption and proof of Theorem~\ref{thm:lower-bound-orth}}
\label{proof:lower-bound-orth}

\begin{assumption}[Output-space curvature link]
\label{ass:curvature-link}
There exists a constant $\mu_0 > 0$ such that for all
$\Delta\theta \in S$,
\begin{equation}
\label{eq:curvature-link}
    \Delta\theta^\top H_0 \Delta\theta
    \;\ge\;
    \mu_0 \,
    \mathbb{E}_{x\sim P_0}\big[\|J_{\theta_0}(x)\Delta\theta\|_2^2\big].
\end{equation}
\end{assumption}

Intuitively, directions in parameter space that induce a large first-order
change in the network output on $P_0$ must incur a proportional amount of
curvature in the old-task loss $L_0$.


Recall the definition of $\varepsilon(S)$ from~\eqref{eq:eps-def}
\begin{equation}
\label{eq:eps-def-appendix}
    \varepsilon(S)
    \coloneqq
    \sup_{\substack{\Delta\theta\in S \\ \|\Delta\theta\|_2 = 1}}
    \Big(
      \mathbb{E}_{x\sim P_0}\big[\|J_{\theta_0}(x)\Delta\theta\|_2^2\big]
    \Big)^{1/2}.
\end{equation}

\begin{proof}
Define
\[
    \Phi(\Delta\theta)
    \coloneqq
    \Big(
      \mathbb{E}_{x\sim P_0}\big[\|J_{\theta_0}(x)\Delta\theta\|_2^2\big]
    \Big)^{1/2}.
\]
$\Phi$ is continuous and the feasible set
$\{\Delta\theta \in S : \|\Delta\theta\|_2 = 1\}$ is compact, so the supremum
in Eq.\ref{eq:eps-def} is attained. Thus there exists
$\Delta\theta^\star \in S$ with $\|\Delta\theta^\star\|_2 = 1$ such that
\begin{equation}
\label{eq:maximizer-eps}
    \Phi(\Delta\theta^\star)
    = \varepsilon(S).
\end{equation}

For a given step size $\rho > 0$, consider the scaled update
$\Delta\theta_\rho \coloneqq \rho\,\Delta\theta^\star \in S$ so that
$\|\Delta\theta_\rho\|_2 = \rho$. We have,
\[
\mathbb{E}_{x\sim P_0}\big[\|J_{\theta_0}(x)\Delta\theta_\rho\|_2^2\big]
=
\rho^2 \,
\mathbb{E}_{x\sim P_0}\big[\|J_{\theta_0}(x)\Delta\theta^\star\|_2^2\big]
=
\rho^2 \,\varepsilon(S)^2.
\]

Applying Assumption~\ref{ass:curvature-link} to $\Delta\theta_\rho$ yields
\begin{equation}
\label{eq:curvature-lower-bound-eps}
    \Delta\theta_\rho^\top H_0 \Delta\theta_\rho
    \;\ge\;
    \mu_0 \,
    \mathbb{E}_{x\sim P_0}\big[\|J_{\theta_0}(x)\Delta\theta_\rho\|_2^2\big]
    =
    \mu_0 \,\rho^2 \varepsilon(S)^2.
\end{equation}

Using the Taylor expansion with
$\Delta\theta = \Delta\theta_\rho$ and $g_0 \approx 0$ gives
\begin{align}
    \mathcal{F}_0(\theta_0,\theta_0+\Delta\theta_\rho)
    &=
    L_0(\theta_0+\Delta\theta_\rho) - L_0(\theta_0) \nonumber \\
    &=
    g_0^\top \Delta\theta_\rho
    + \frac{1}{2}\,\Delta\theta_\rho^\top H_0 \Delta\theta_\rho
    + R(\Delta\theta_\rho). \label{eq:forgetting-quadratic-appendix}
\end{align}
For a well-trained model we take $g_0 \approx 0$ and neglect the linear term.
We obtain
\begin{align*}
    \mathcal{F}_0(\theta_0,\theta_0+\Delta\theta_\rho)
    &\ge
    \frac{1}{2}\,\mu_0 \rho^2 \varepsilon(S)^2 - C_T \rho^3 \\
    &= \left(\frac{\mu_0}{2}\right)\rho^2 \varepsilon(S)^2 - C_T \rho^3.
\end{align*}

\end{proof}

\subsection{Proof of Proposition~\ref{prop:layerwise_prototype_span}}
\label{proof:layerwise_prototype_span}

\begin{proof}
This follows directly from the explicit Deep Neural Collapse solution form derived in \cite{garrod2024persistence}, where each
row direction is expressed (up to scale) as a linear combination of the layerwise prototype directions; see
the displayed derivation yielding $w^{(\ell)}_j \propto \sum_{c=1}^K O_{cj}\mu^{(\ell)}_c$.
\end{proof}

\subsection{Proof of Proposition~\ref{prop:generic-upper}}
\label{proof:generic-upper}
\begin{proof}
For any $\Delta\theta\in S$ with $\|\Delta\theta\|\le\rho$, since we have $g_0=\nabla_\theta L_0(\theta_0) \approx 0$ 
\[
\mathcal{F}_0(\theta_0,\theta_0+\Delta\theta)
=
\frac12 \Delta\theta^\top H_0 \Delta\theta + O(\Delta\theta^3)
\]
By definition of $\lambda(S)$,
$\Delta\theta^\top H_0\Delta\theta \le \lambda(S)\|\Delta\theta\|^2 \le \lambda(S)\rho^2$. Thus
\[
\mathcal{F}_0(\theta_0,\theta_0+\Delta\theta)
\le
\frac{\lambda(S)}{2}\rho^2 + O(\rho^3).
\]
The unconstrained case follows from $\lambda(\mathbb{R}^{\dim(\theta)})=\lambda_{\max}$.
\end{proof}

\subsection{Proof of Proposition~\ref{prop:gn-upper}}
\label{proof:gn-upper}
\begin{proof}
Recall that
\[
L_0(\theta)=\mathbb{E}_{(x,y)\sim P_0}\big[\ell(f_\theta(x),y)\big],
\qquad
H_0=\nabla_\theta^2 L_0(\theta_0)
= \mathbb{E}_{(x,y)\sim P_0}\big[\nabla_\theta^2 \ell(f_\theta(x),y)\big]_{\theta=\theta_0}.
\]
Fix $(x,y)$ and define $\phi(\theta)\coloneqq\ell(f_\theta(x),y)$. Then
\begin{equation}
\label{eq:hess-chain}
\nabla_\theta^2 \phi(\theta)
=
J_\theta(x)^\top \,\nabla_f^2 \ell(f_\theta(x),y)\, J_\theta(x)
\;+\;
\sum_{i=1}^{d_{\text{out}}} \frac{\partial \ell}{\partial f_i}(f_\theta(x),y)\,\nabla_\theta^2 f_{\theta,i}(x),
\end{equation}
where $J_\theta(x)=\nabla_\theta f_\theta(x)$ and $f_{\theta,i}(x)$ denotes the $i$-th output coordinate.

The second term in~\eqref{eq:hess-chain} is the only deviation from Gauss--Newton curvature, and it is weighted by the loss sensitivity to the model output. For a well-optimized pretrained model on the old distribution, this output sensitivity is small on old-task data, so we treat the second term as a residual curvature contribution. Concretely, we assume that along the update directions of interest this residual contribution is uniformly bounded by a constant $R$ (and in the idealized case of zero loss on $P_0$, it vanishes). For analytical clarity we present the bound with $R=0$; empirically the residual is not exactly zero, but as we show in Appendix~\ref{app:gn-diagnostic} the Gauss--Newton component is substantially more visible inside the PLATE subspace than inside generic backbone or LoRA directions, making the bound below an informative design principle for PLATE in particular.

Now, using $\nabla_f^2 \ell(f_{\theta_0}(x),y)\preceq \beta I$ we have for any vector $v$
\[
v^\top \nabla_\theta^2 \ell(f_{\theta_0}(x),y)\, v
=
(J_{\theta_0}(x)v)^\top \nabla_f^2 \ell(f_{\theta_0}(x),y)\,(J_{\theta_0}(x)v)
\;\le\;
\beta \,\|J_{\theta_0}(x)v\|_2^2.
\]
Taking expectation over $(x,y)\sim P_0$ gives
\[
v^\top H_0 v
\;=\;
\mathbb{E}_{(x,y)\sim P_0}\big[v^\top \nabla_\theta^2 \ell(f_{\theta_0}(x),y)\, v\big]
\;\le\;
\beta\,\mathbb{E}_{x\sim P_0}\big[\|J_{\theta_0}(x)v\|_2^2\big].
\]

Now restrict to $v\in S$ with $\|v\|_2=1$ and take the supremum
\[
\lambda(S)
=
\sup_{\substack{v\in S\\ \|v\|_2=1}} v^\top H_0 v
\;\le\;
\beta \sup_{\substack{v\in S\\ \|v\|_2=1}}
\mathbb{E}_{x\sim P_0}\big[\|J_{\theta_0}(x)v\|_2^2\big]
=
\beta\,\varepsilon(S)^2.
\]
\end{proof}

\subsection{Empirical diagnostic of the Gauss--Newton decomposition}
\label{app:gn-diagnostic}
Proposition~\ref{prop:gn-upper} discards the residual second-order term in Eq.~\eqref{eq:hess-chain} for clarity. We test the regime in which this approximation is most informative by directly measuring, at the old-task solution $\theta_0$, the directional decomposition
\[
q_H(v) \;=\; v^\top H_0 v,
\qquad
q_{GN}(v) \;=\; v^\top J_{\theta_0}^\top \nabla_f^2 \ell\, J_{\theta_0}\, v,
\qquad
q_R(v) \;=\; q_H(v) - q_{GN}(v),
\]
for unit-norm directions $v$ drawn from three update families: the full backbone (generic), the LoRA adapter subspace, and the PLATE subspace. We sample directions across multiple seeds and batches and aggregate the resulting $1{,}890$ probes ($\sim 600$ per family).

\begin{figure}[H]
  \centering
  \includegraphics[width=.7\linewidth]{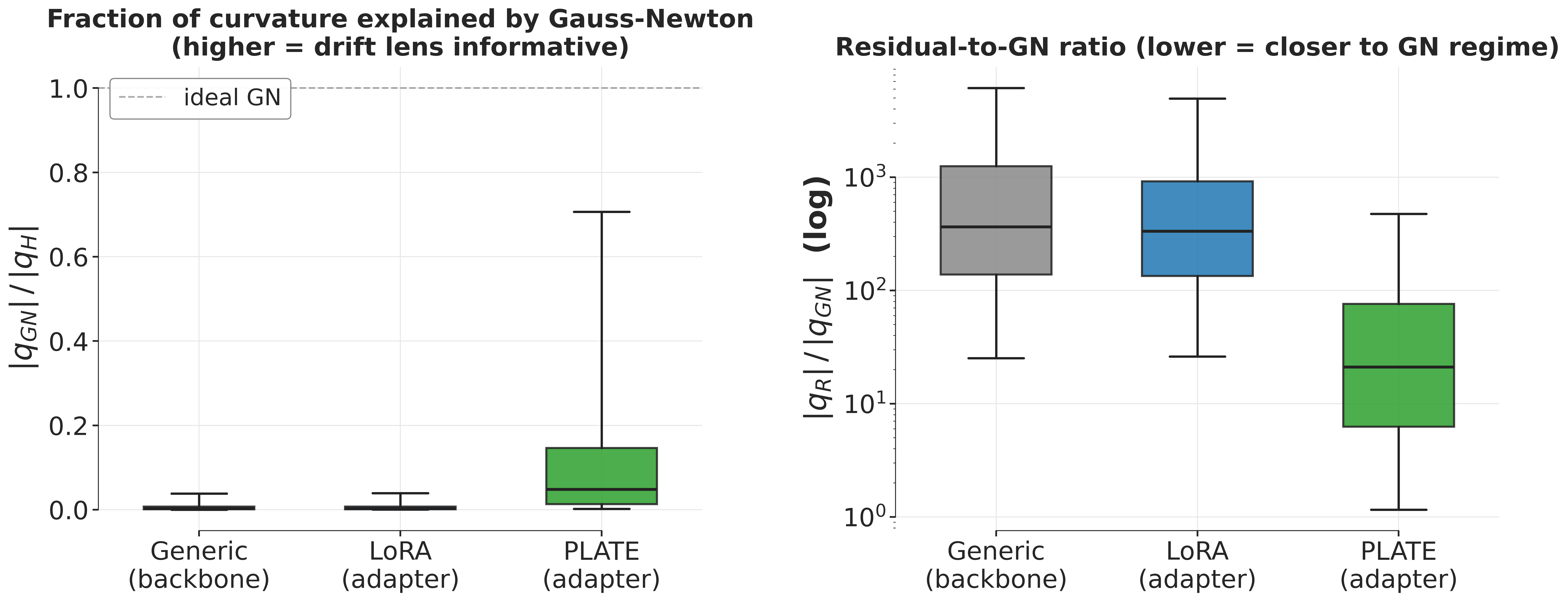}
  \caption{\textbf{Curvature decomposition at $\theta_0$ on MNIST.}
  \textbf{(Left)} Fraction of directional curvature explained by the Gauss--Newton term ($|q_{GN}|/|q_H|$); higher means the drift-based lens of Section~\ref{sec:gn-bound} is more informative.
  \textbf{(Right)} Residual-to-Gauss--Newton ratio ($|q_R|/|q_{GN}|$) on log scale; lower means closer to the pure Gauss--Newton regime.
  Across $\sim 1{,}890$ directional probes, generic backbone and LoRA directions are dominated by the residual term (median $|q_{GN}|/|q_H|\approx 0.003$, $|q_R|/|q_{GN}|\approx 350$). PLATE directions are the only ones for which the Gauss--Newton component becomes materially visible (median $|q_{GN}|/|q_H|\approx 0.05$, $|q_R|/|q_{GN}|\approx 21$), more than an order of magnitude closer to the GN regime than the other tested families.}
  \label{fig:gn-diagnostic}
\end{figure}

The residual term does not vanish along PLATE directions, but PLATE is the only tested update family in which the Gauss--Newton component becomes a non-negligible fraction of the total curvature. This is consistent with PLATE's geometric construction: by concentrating updates on redundant input/output directions, PLATE selects a subspace where small functional drift on $P_0$ \emph{actually} translates into small curvature, confirming how informative is the drift-based bound of Proposition~\ref{prop:gn-upper}.

\subsection{Proof of Theorem~\ref{thm:drift-upper}}
\label{proof:drift-upper}
\begin{proof}
By Proposition~\ref{prop:generic-upper}, there exist $\rho>0$ such that for any linear subspace $S$ 
\[
\mathcal{F}_{\max}(S,\rho)
\;\le\;
\frac{\lambda(S)}{2}\rho^2 + C\rho^3.
\]
Under $\nabla_f^2 \ell(f_{\theta_0}(x),y)\preceq \beta I$,
Proposition~\ref{prop:gn-upper} yields $\lambda(S)\le \beta\,\varepsilon(S)^2$.
Therefore
\[
\mathcal{F}_{\max}(S,\rho)
\;\le\;
\frac{\beta}{2}\,\varepsilon(S)^2\,\rho^2 + O(\rho^3),
\]
\end{proof}

\section{Algorithms \& Computational Details}

\subsection{Constructing the redundant-neuron selector $B$}
\label{sec:plate-B}

The first mechanism to reduce drift is to restrict plasticity to \emph{redundant} output channels
(Section~\ref{sec:why-lowdrift}). Intuitively, if a layer's output direction is implemented repeatedly by many other neurons, then modifying one instance tends to induce smaller functional change on the pretrained representation, and reduces drift on old inputs (Section~\ref{sec:redundancy-datafree}).

As theoretically justified for pruning in \citep{you2021max}, PLATE selects trainable output neurons using a simple redundancy heuristic: output rows that are highly colinear with many others are treated as redundant and are therefore candidates for plasticity (see also Section~\ref{sec:why-lowdrift}). Concretely, for each layer we score each output neuron $w_i^\top$ of $W$ by an estimate of
its average cosine similarity to a set of \emph{anchor} rows, and we pick the top-$r$ rows using this score.

To scale to large $d_{\mathrm{in}}$, we compute similarities in a random projection space. Let $R\in\mathbb{R}^{d_{\mathrm{in}}\times d'}$ be a random Gaussian projection (with $d'\ll d_{\mathrm{in}}$) and define
\[
z_i \;=\; \frac{w_i^\top R}{\|w_i\|_2}\in\mathbb{R}^{d'}.
\]
We choose a set of anchor indices $\mathcal{A}$, and score each neuron by
\begin{equation}
\label{eq:plate-neuron-score}
s_i \;=\; \frac{1}{|\mathcal{A}|}\sum_{j\in\mathcal{A}} \big|\langle z_i, z_j\rangle\big|.
\end{equation}
PLATE sets $\mathcal{I}$ to the indices of the $r$ largest scores $\{s_i\}$ and defines
$B=[e_i]_{i\in\mathcal{I}}$.

Rows with high $s_i$ lie in densely populated directions of row space (directions implemented repeatedly by many neurons).
Concentrating plasticity on these rows is therefore a pretrained distribution data-free bias toward update directions that induce smaller functional drift on the pretrained behavior.

\subsection{Constructing the weight-derived input basis $Q$}
\label{sec:plate-Q}

The second mechanism to reduce drift is to constrain the input side of the update to directions that are ``low-energy'' with respect
to the frozen part of the layer as a proxy to ``low-energy'' $P_0$ data directions . The design choice is that $Q$ is computed from the \emph{complement} of the selected trainable rows.

Let $\mathcal{I}$ be the selected indices and define the frozen-row submatrix
\[
W_{\mathrm{frozen}} \;\in\; \mathbb{R}^{(d_{\mathrm{out}}-r)\times d_{\mathrm{in}}}
\quad\text{by removing rows indexed by }\mathcal{I},
\]
and consider its Gram matrix $G_{\mathrm{in}} \;\coloneqq\; W_{\mathrm{frozen}}^\top W_{\mathrm{frozen}} \;\in\; \mathbb{R}^{d_{\mathrm{in}}\times d_{\mathrm{in}}}.$ This low-energy (bottom-eigenspace) construction is the same weight-only protected input subspace described in Section~\ref{sec:redundancy-datafree}: it corresponds to the (approximate) nullspace of $W_{\mathrm{frozen}}$, i.e., directions orthogonal to the span of frozen neurons (our proxy for dominant pretraining-era feature directions).

Directions with small quadratic form under $G_{\mathrm{in}}$ correspond to inputs that weakly excite the frozen neurons.
PLATE defines $Q$ as an orthonormal basis spanning a \emph{low-energy} subspace of $G_{\mathrm{in}}$,
i.e., $Q$ approximates the bottom-eigenspace of $G_{\mathrm{in}}$. Restricting updates to this subspace is intended to limit interaction with dominant pretrained
features, and thus reduce first-order output drift on old inputs.

PLATE selects the basis dimension $k$ using a threshold $\tau\in(0,1)$: $k$ is chosen so that the complementary
(high-energy) subspace captures approximately a $\tau$ fraction of the estimated energy, and $k$ is capped by $k_{\max}$.
This yields an explicit \emph{plasticity knob}: larger $\tau$ (inducing smaller $k$)  enforces stricter input constraints and typically improves retention.

When $d_{\mathrm{in}}$ is large, PLATE avoids forming $G_{\mathrm{in}}$ explicitly and instead uses a structured randomized Hadamard
transform (SRHT) together with batched Hutchinson-style probes to estimate low-energy directions efficiently. Concretely, PLATE
(i) applies an SRHT rotation, (ii) screens candidate coordinates using coarse then refined probes, and (iii) polishes
within the candidate span by solving a small eigenproblem to recover a $k$-dimensional low-energy subspace. 

\subsection{Sequential PLATE for $T \ge 3$ tasks}
\label{app:sequential-protocol}

Algorithm~\ref{alg:plate} specifies PLATE for the two-task setting that is the focus of the main body. For task streams of length $T \ge 3$, we use a simple recursive extension: after each task, the trained adapter is merged into the backbone and the PLATE preprocessing is recomputed on the updated weights, refreshing the protected and plastic subspaces for the next task.

\begin{algorithm}[H]
\caption{Sequential PLATE for $T \ge 3$ tasks}
\label{alg:sequential-plate}
\begin{algorithmic}[1]
\REQUIRE Pretrained model parameters $\theta_0$ with weights $\{W_0^{(\ell)}\}_\ell$, task stream $\mathcal{D}_1,\dots,\mathcal{D}_T$, hyperparameters $(r,\tau,\rho)$
\FOR{$t = 1$ to $T$}
    \STATE Run \textsc{PLATE} preprocessing (Algorithm~\ref{alg:plate}, lines 2--6) on the current backbone $\{W_{t-1}^{(\ell)}\}_\ell$ to obtain layerwise selectors $\{B_t^{(\ell)}\}$, low-energy input bases $\{Q_t^{(\ell)}\}$, and freshly initialized adapter cores $\{A_t^{(\ell)}\}$
    \STATE Train only $\{A_t^{(\ell)}\}_\ell$ (and the task-$t$ head) on $\mathcal{D}_t$
    \STATE Merge the trained adapter into the backbone, $W_t^{(\ell)} \gets W_{t-1}^{(\ell)} + \rho\, B_t^{(\ell)} A_t^{(\ell)} Q_t^{(\ell)\top}$ for all $\ell$
    \STATE Save the task-$t$ head for inference on $\mathcal{D}_t$
\ENDFOR
\end{algorithmic}
\end{algorithm}

Two properties of this recursion are worth highlighting. First, the protected and plastic subspaces $(B_t, Q_t)$ are \emph{recomputed} after each merge from the most recent backbone, so PLATE is not tied to one fixed input nullspace: the admissible update family adapts to the evolving model geometry. Second, every per-task update remains weight-only (no access to $\mathcal{D}_1,\dots,\mathcal{D}_{t-1}$ is required at task $t$). The cost of this construction is that the redundancy PLATE exploits is shared across tasks: in principle, the available pool can shrink as updates accumulate. We empirically characterize this behaviour over four sequential MNIST tasks in Appendix~\ref{app:saturation}, where retention degrades mildly ($\sim 6\%$ on Task~$1$ after four merges at the canonical config) and the $\tau$-thresholded pool remains stable over the horizon we test.

\subsection{Computational Complexity Analysis}
\label{sec:complexity}
\begin{figure}[H]
  \centering
  \includegraphics[width=\linewidth]{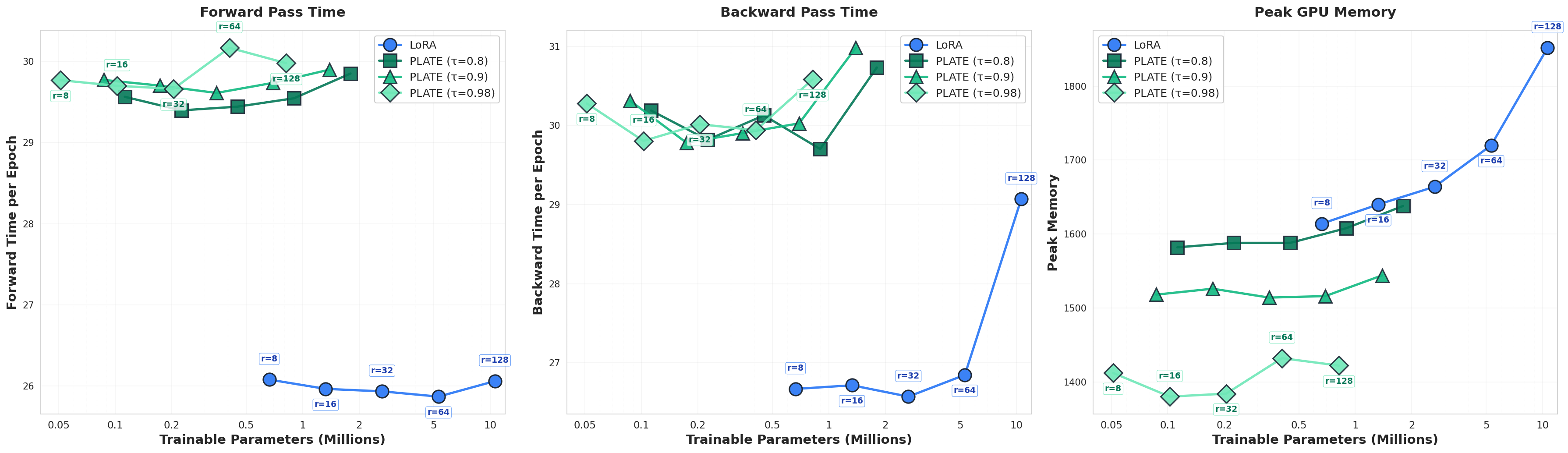}
 \caption{\textbf{Training efficiency of PLATE vs.\ LoRA on DistilBERT.}
We adapt all linear layers and measure forward/backward time per epoch and peak GPU memory as a function of the total number of \emph{trainable} adapter parameters.
$(i)$ For a fixed output rank $r$, PLATE trains only $A\in\mathbb{R}^{r\times k}$ and therefore uses $rk$ trainable parameters per layer, whereas LoRA trains two matrices and uses $r(d_{\mathrm{in}}+d_{\mathrm{out}})$; across the sweep this yields fewer trainable parameters for PLATE, reducing optimizer-state and checkpoint size. $(ii)$ Despite storing frozen bases $Q$, PLATE achieves lower peak GPU memory than LoRA in this setting, due to reduced optimizer states and a smaller adapter-induced activation footprint (only $A$ is trainable).
$(iii)$ PLATE incurs a $\sim$10\%--15\% per-epoch time overhead, driven primarily by the extra projection through $Q$.
$(iv)$ Lowering $\tau$ (darker green) increases the induced basis dimension $k$ and therefore increases memory, while leaving training time nearly unchanged. Overall, PLATE trades a modest compute overhead for substantially fewer trainable parameters, improved memory efficiency, and a trade-off between plasticity and memory retention.}
  \label{fig:compute}
\end{figure}

We compare the additional compute and memory introduced by the adapter branch of LoRA and PLATE for a linear layer
$W\in\mathbb{R}^{d_{\text{out}}\times d_{\text{in}}}$ and batch size $n$. 

LoRA parameterizes a rank-$r$ update as $\Delta W = B_{\ell}A_{\ell}$ with trainable
$A_{\ell}\in\mathbb{R}^{r\times d_{\text{in}}}$ and $B_{\ell}\in\mathbb{R}^{d_{\text{out}}\times r}$, so each adapted
layer introduces $r(d_{\text{in}}+d_{\text{out}})$
trainable parameters. In PLATE, the update is $\Delta W = BAQ^\top$ with a frozen input basis
$Q\in\mathbb{R}^{d_{\text{in}}\times k}$, a trainable core $A\in\mathbb{R}^{r\times k}$, and a frozen output selector
$B\in\mathbb{R}^{d_{\text{out}}\times r}$ (stored as a dense matrix in our implementation). This yields $rk$
trainable parameters per layer. For same $r$, when $k\ll d_{\text{in}}+d_{\text{out}}$, PLATE therefore reduces the number of trainable
parameters and corresponding optimizer states by a large factor.

In the forward pass, LoRA applies $\Delta Wx$ as $(xA^\top)B^\top$, which costs
$\mathcal{O}(n d_{\text{in}}r + n r d_{\text{out}})$. In our PLATE implementation, the adapter branch is evaluated as
\[
Z \coloneqq xQ \in \mathbb{R}^{n\times k},\qquad
U \coloneqq ZA^\top \in \mathbb{R}^{n\times r},\qquad
\Delta y \coloneqq UB^\top \in \mathbb{R}^{n\times d_{\text{out}}},
\]
implemented via dense \texttt{torch.nn.functional.linear} calls for both $Q$ and $B$. Consequently, the adapter-branch
forward cost is
\[
\mathcal{O}(n d_{\text{in}}k \;+\; nkr \;+\; nr d_{\text{out}}).
\]
This aligns with our empirical observation (see Figure~\ref{fig:compute})  that
PLATE incurs a modest training-time overhead driven primarily by the additional projection through $Q$

Memory usage has three main components: optimizer states for trainable parameters, frozen adapter buffers, and saved activations
needed for backpropagation. LoRA introduces two trainable matrices per adapted layer and therefore requires optimizer states for
$r(d_{\text{in}}+d_{\text{out}})$ parameters, whereas PLATE trains only $A\in\mathbb{R}^{r\times k}$ and thus requires
optimizer states for only $rk$ parameters. PLATE additionally stores frozen buffers
$Q\in\mathbb{R}^{d_{\text{in}}\times k}$ and $B\in\mathbb{R}^{d_{\text{out}}\times r}$ (stored densely in our implementation),
which contribute $(d_{\text{in}}k + d_{\text{out}}r)$ in memory but do not create optimizer states.

Crucially, peak training memory is also driven by the activations that autograd must retain to form weight gradients. As a result, LoRA's trainable down-projection requires retaining the full input activation
$x\in\mathbb{R}^{n\times d_{\text{in}}}$ (in addition to smaller rank-$r$ intermediates), and this cost accumulates across all
adapted layers. In contrast, PLATE keeps both $Q$ and $B$ frozen and trains only $A$, so the adapter branch only needs to retain
the projected activation $Z=xQ\in\mathbb{R}^{n\times k}$ to compute $\nabla A$. When $k\ll d_{\text{in}}$, this reduces the
adapter-induced activation footprint, and in our DistilBERT experiment the optimizer-state and activation savings dominate the
additional frozen-basis storage, yielding lower peak GPU memory for PLATE across hyperparameters (see Figure~\ref{fig:compute}).

\subsection{Computational Analysis for PLATE initialization}

\begin{figure}[H]
  \centering
  \includegraphics[width=\linewidth,height=0.42\textheight,keepaspectratio]{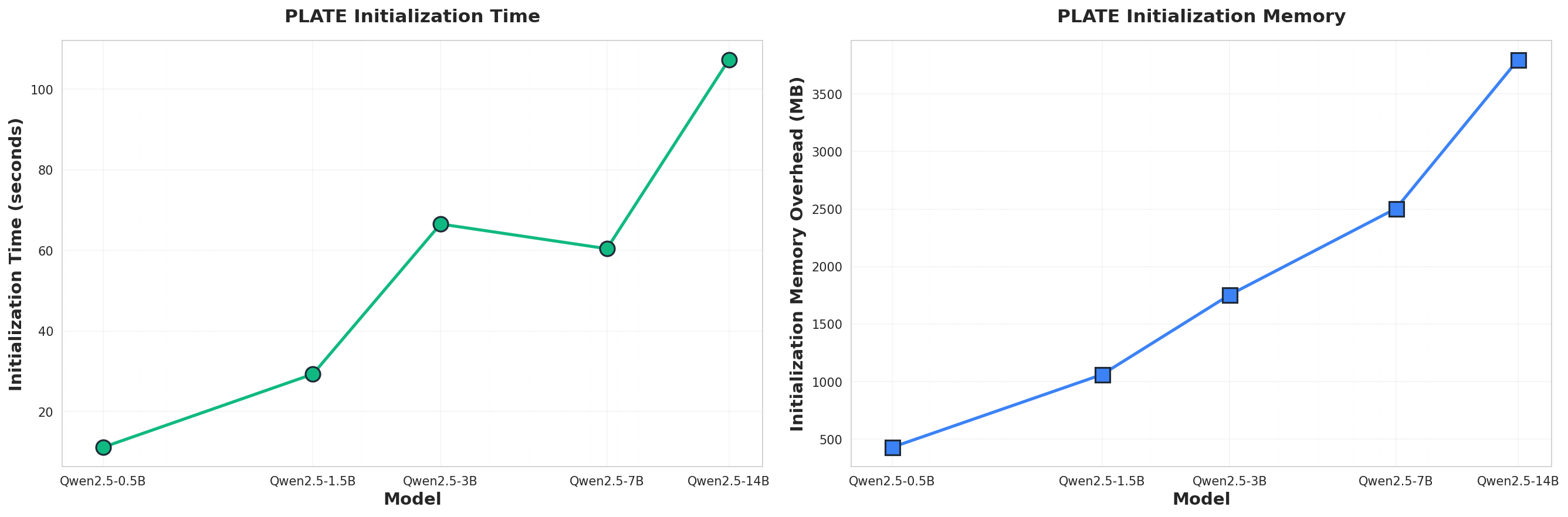}
  \caption{\textbf{Initialization complexity of PLATE adapters as a function of model size:}
  \textbf{(left)} Time complexity for computing $Q$ matrices (via SRHT-based eigenproblem solving) and $B$ selection matrices. \textbf{(Right)} Peak memory overhead during initialization, which includes both permanent adapter parameters and temporary computation buffers. Experiments conducted on Qwen2.5 models with fixed PLATE hyperparameters $(r=64, \tau=0.9)$.}
  \label{fig:init_complex}
\end{figure}

\section{Additional Experiments}
\label{app:add_experiments}
We now provide additional experiments to Section~\ref{sec:experiments}.

\subsection{Synthetic regression with tunable task dissimilarity}
\label{sec:exp_regression}
\begin{figure}[h]
  \centering
  \includegraphics[width=.9\linewidth]{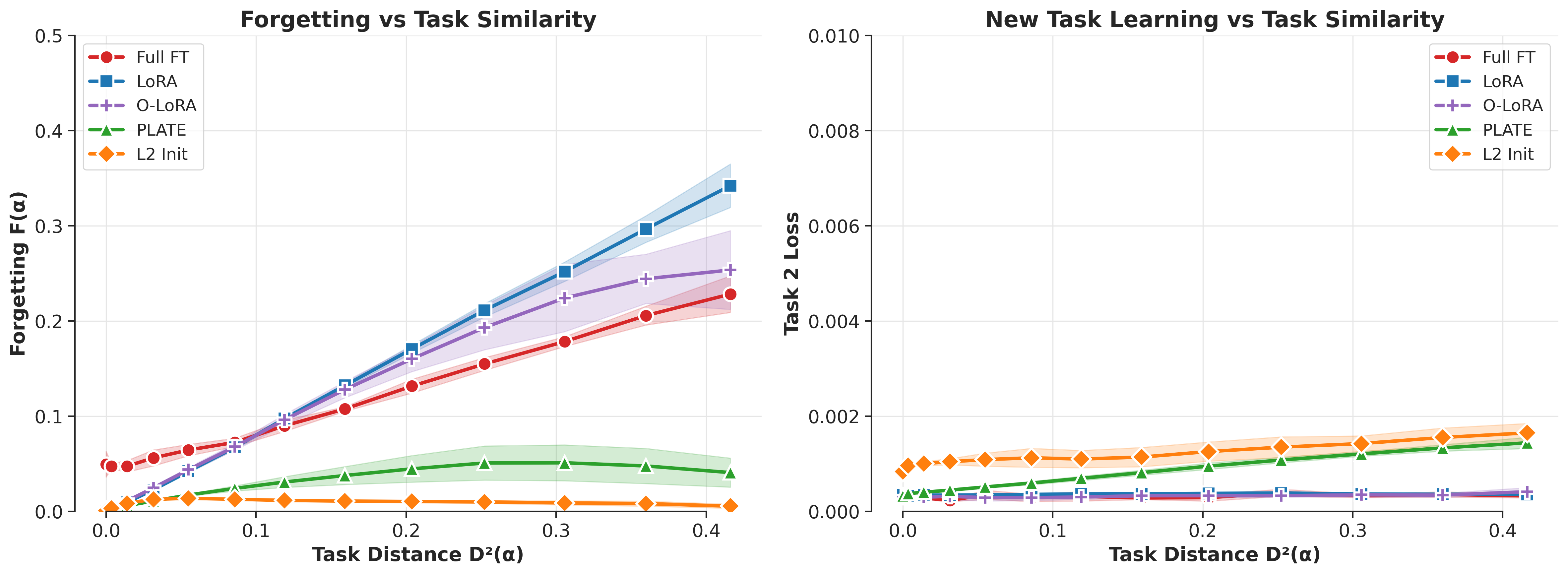}
  \caption{\textbf{Synthetic regression with tunable task dissimilarity:}
  (\textbf{Left}) forgetting on task~1 (increase in MSE) versus task dissimilarity $D^2(\alpha)$;  
  (\textbf{Right}) Task~2 test loss versus $D^2(\alpha)$. Forgetting for full FT (red) and LoRA (blue) grows roughly linearly with $D^2(\alpha)$, while PLATE (green) remains an order of magnitude smaller even for dissimilar tasks, with only a modest increase in Task~2 loss.}
  \label{fig:regression_distance}
\end{figure}
To explicitly vary task relatedness and capture how full fine-tuning, LoRA, and PLATE forget/learn when task~$1$ and task~$2$ dissimilarity increases. Concretely, we construct two regression tasks over Gaussian inputs $\mathbf{x}\sim\mathcal{N}(0,I_{100})$ with targets
$f(\mathbf{x})=\tanh(\mathbf{w}^\top\mathbf{x})$.
Task~1 uses a fixed unit vector $\mathbf{w}_1$ while task~2 uses a rotated version of $\mathbf{w}_1$ where $\alpha$ relates to the rotation angle.
Task dissimilarity is measured as $D^2(\alpha)=\mathbb{E}[(f_1(\mathbf{x})-f_{2,\alpha}(\mathbf{x}))^2]$. We use a 2-layer $\tanh$ MLP (512 units). LoRA (rank $8$) on all backbone layers, and PLATE ($r=50$, $\tau=0.6$); all methods train 100 epochs per task (Table~\ref{tab:experimental_configs}).

Figure~\ref{fig:regression_distance} shows forgetting and task~2 loss as a function of task dissimilarity, i.e., $D^2(\alpha)$. Full fine-tuning and LoRA exhibit approximately linear growth of forgetting with task dissimilarity; LoRA forgets even more than full fine-tuning despite using only a small subspace.  
PLATE, in contrast, keeps forgetting an order of magnitude smaller across the entire range, even when the tasks are nearly orthogonal, while having a slightly higher task~2 loss than the other methods. We additionally evaluate the two replay-free baselines. L2-Init is the strongest at preventing forgetting (reaching only $\sim\!5\times10^{-3}$ MSE drift at maximum task distance, an order of magnitude below Full FT) but at the cost of a higher task-2 loss ($\sim\!1.6\times10^{-3}$ vs.\ $\sim\!3\times10^{-4}$ for LoRA), reflecting the tension between regularization strength and plasticity. O-LoRA closely tracks LoRA and, like LoRA, fails to control forgetting as task dissimilarity grows. PLATE remains the most favorable point on the curve, matching L2-Init on task-2 loss while keeping forgetting below it for nearly the entire distance range.
This synthetic experiment directly supports our theoretical picture: forgetting is governed by the geometry of the allowed update family $S$. In particular, restricting updates to be approximately data-orthogonal and concentrating plasticity on redundant degrees of freedom yields forgetting that grows only weakly with task drift.

\subsection{Worst-case forgetting complementary experiment}
\label{app:worst-case}
\begin{figure}[H]
  \centering
  \includegraphics[width=.6\linewidth]{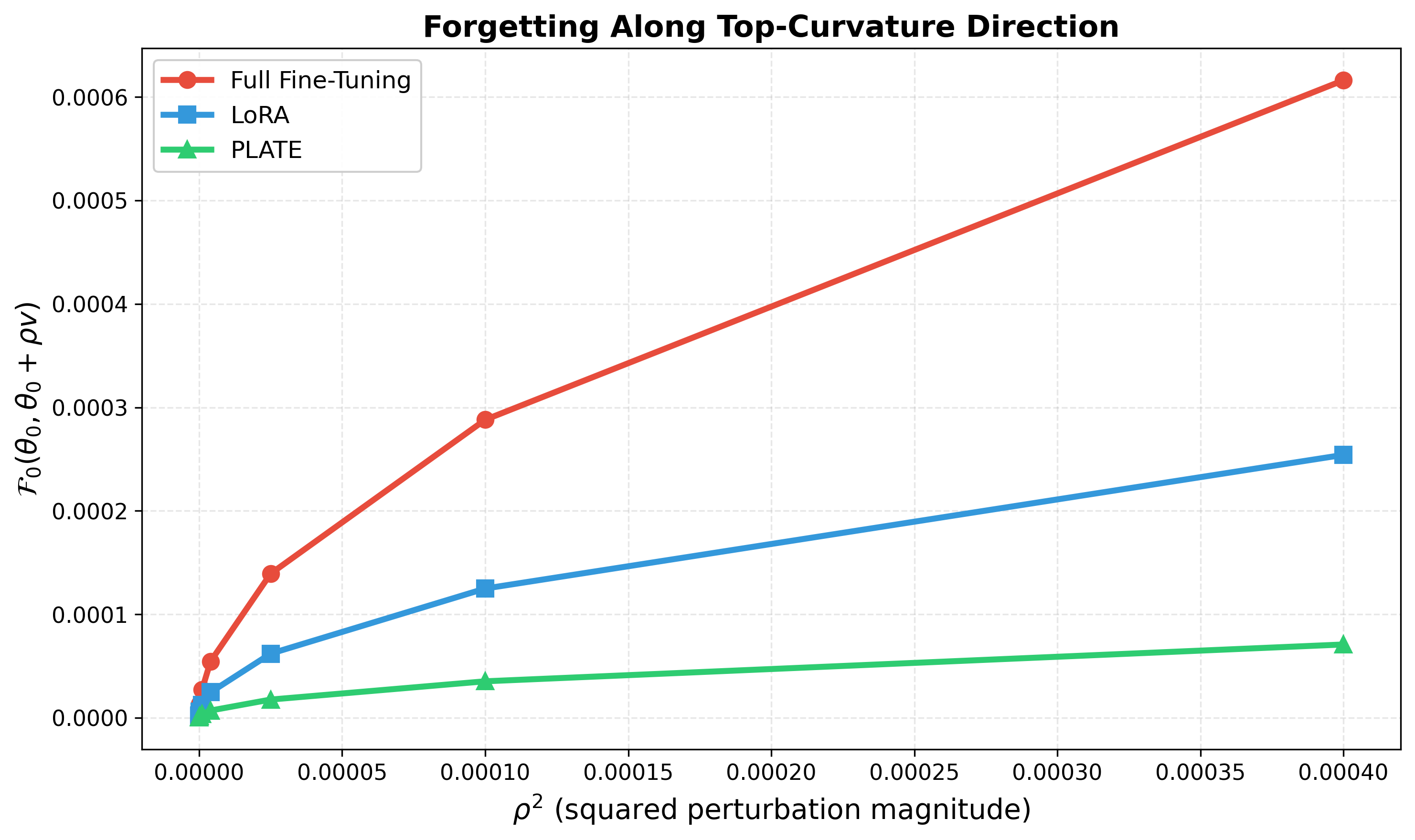}
  \caption{\textbf{Restricted-curvature forgetting: }
We train an MLP on MNIST digits 0-4 to obtain parameters $\theta_0$. For each method, we perturb the trained model by $\theta_0 + \rho v$ and measure the resulting forgetting $\mathcal{F}0(\theta_0, \theta_0 + \rho v) = L_0(\theta_0 + \rho v) - L_0(\theta_0)$ where $v$ is the unit vector in each subspace that maximizes  $v^T H_0 v$, i.e., \emph{highest-curvature direction}. PLATE exhibits the smallest slope, indicating substantially reduced
restricted curvature and correspondingly smaller worst-case forgetting.}
  \label{fig:restricted-curvature}
\end{figure}

Figure~\ref{fig:restricted-curvature} empirically supports the restricted-curvature perspective:
even within a parameter-efficient family, the old-task loss can increase rapidly if the family still
contains high-curvature directions. In this MNIST experiment, LoRA’s low-rank tangent subspace still exposes directions with relatively large restricted curvature, whereas PLATE’s structured family
exhibits a smaller slope. This is consistent with PLATE’s two weight-only restrictions:
$(i)$ restricting plasticity to a subset of redundant output channels and $(ii)$ constraining updates to a low-energy input subspace inferred from frozen weights, both designed to reduce drift.

\subsection{Matched-capacity ablations and direct old-model drift}
\label{app:ablations}
To isolate the contributions of the two PLATE design choices ($B$ and $Q$), we run matched-capacity ablations that keep the adapter architecture and parameter count identical to PLATE but replace the geometric construction of $B$, $Q$, or both, with the following internal controls.
\textbf{Random-$B$} replaces the redundant-neuron selection with a random subset of $r$ output channels;
\textbf{Least-red.-$B$} selects the $r$ \emph{least} redundant channels;
\textbf{Random-$Q$} replaces the low-energy input basis with a random orthonormal $k$-frame;
\textbf{Random-$(B{+}Q)$} randomizes both; and
\textbf{High-energy-$Q$} selects the $k$ \emph{top} (rather than bottom) eigenvectors of $G_{\mathrm{in}}$.
All controls share PLATE's parameter count, optimizer, and training schedule.

\begin{figure}[H]
  \centering
  \includegraphics[width=.7\linewidth]{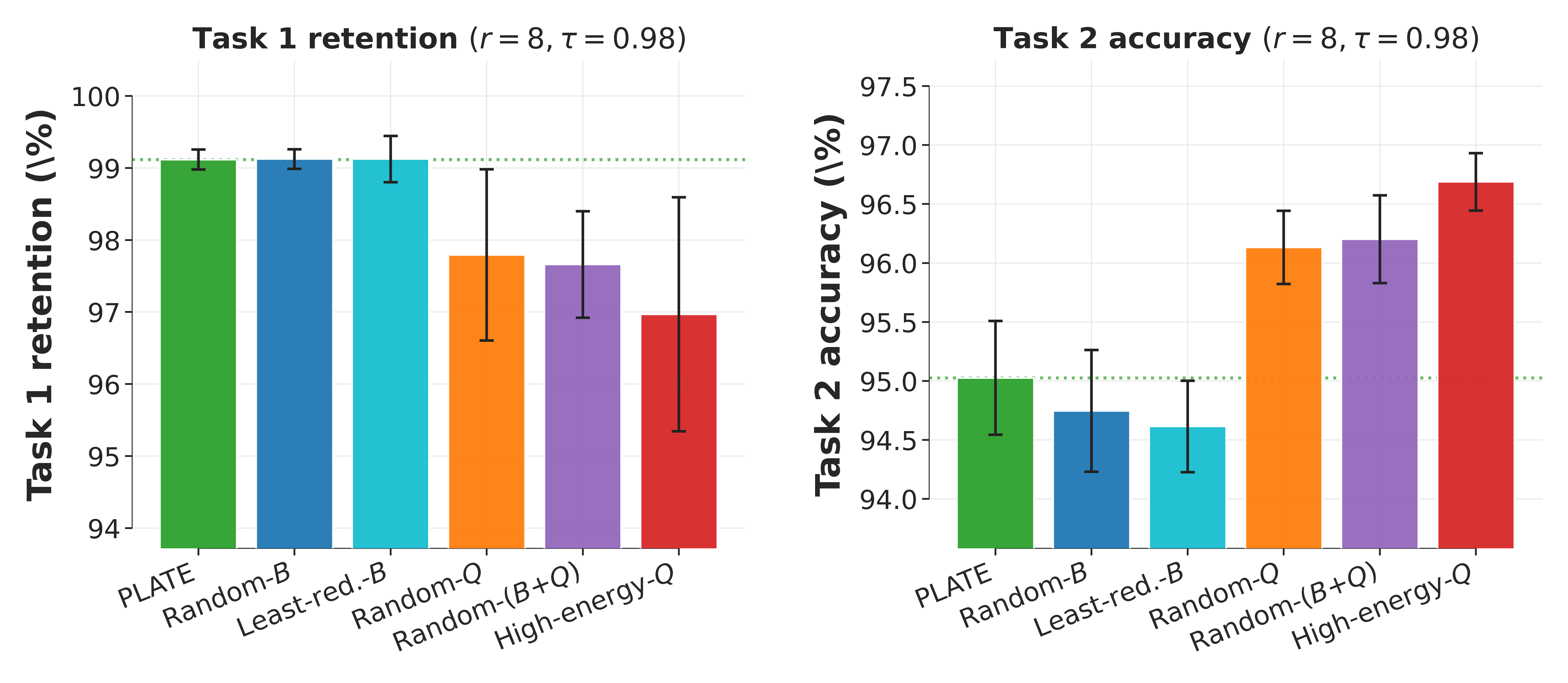}
  \caption{\textbf{MNIST 0-4$\to$5-9 matched-capacity ablations.}
  \textbf{(Left)} Task-1 retention at the canonical config $(r{=}8,\tau{=}0.98)$ averaged over $5$ seeds.
  \textbf{(Right)} Task-2 accuracy at the same config. Randomizing the input basis $Q$ (orange, purple, red) consistently lowers Task-1 retention compared to PLATE while slightly boosting Task-2 accuracy, whereas randomizing $B$ (blue, cyan) leaves retention essentially unchanged but slightly hurts new-task adaptation. This empirically separates the two PLATE design choices: \emph{$Q$ is the retention driver, $B$ is the adaptation driver}.}
  \label{fig:mnist-ablations}
\end{figure}

\begin{figure}[h]
  \centering
  \includegraphics[width=.7\linewidth]{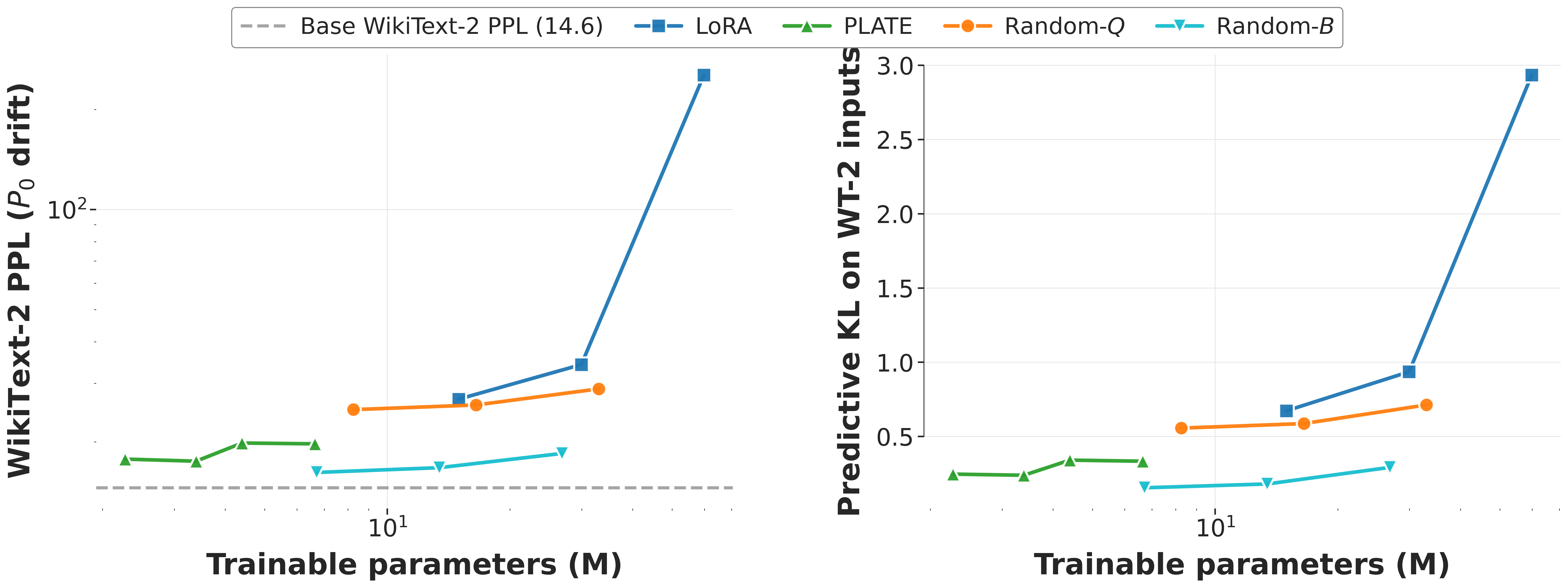}
  \caption{\textbf{Direct old-model drift on WikiText-2 $\to$ Middle English (Qwen2.5-3B).}
  \textbf{(Left)} WikiText-2 perplexity after task-2 adaptation; the grey dashed line is the base-model PPL. \textbf{(Right)} Average predictive KL between the base and adapted models on WikiText-2 inputs. At matched parameter budgets, Random-$Q$ produces $\approx 2{-}3\times$ larger KL drift than PLATE, while Random-$B$ tracks PLATE closely; LoRA's drift grows sharply with rank and is up to an order of magnitude larger than PLATE's at comparable capacity. This confirms the same $B/Q$ asymmetry observed on MNIST in a language-modeling setting.}
  \label{fig:me-drift}
\end{figure}

These ablations resolve the question of whether the gains come from using \emph{any} restricted update family of the right size or specifically from PLATE's weight-only construction. Replacing $Q$ with a random orthonormal basis at matched parameter count consistently increases both predictive KL on $P_0$ and old-task forgetting, while replacing $B$ with random or anti-redundant selections has little effect on retention but slightly degrades new-task learning. This is consistent with our theoretical framing in Section~\ref{sec:why-lowdrift}: input-side protection via $Q$ controls $\varepsilon(S)$ and thus the worst-case forgetting floor, while output-side restriction via $B$ primarily shapes \emph{where} the plasticity budget is allocated.

\subsection{Weight geometry vs.\ activation geometry}
\label{app:weight-vs-activation}
PLATE constructs the protected input subspace $Q$ from the top-$k$ \emph{right singular vectors of the (frozen) weight matrix $W$}. A natural concern is whether these weight-derived directions actually capture the dominant directions of the empirical activation distribution that the model sees at inference. We test this directly.

For each linear layer with weight $W\in\mathbb{R}^{d_{\mathrm{out}}\times d_{\mathrm{in}}}$ and a set of $N$ input activations $X\in\mathbb{R}^{N\times d_{\mathrm{in}}}$ collected from pretraining-like data (MNIST digits 0--4 for the MLP; WikiText-2 for Qwen2.5-3B), we compare three $k$-dimensional input subspaces: the top-$k$ right singular vectors of $W$ (the PLATE basis, $V_k=[v_1,\dots,v_k]$); the top-$k$ eigenvectors of $\Sigma_{\mathrm{act}}=\tfrac1N X^\top X$ (the data-driven oracle, $U_k=[u_1,\dots,u_k]$); and a uniformly random $k$-frame. We report
$
\mathrm{VE}_k\;\coloneqq\;\frac{\operatorname{tr}(V_k^\top \Sigma_{\mathrm{act}} V_k)}{\operatorname{tr}(\Sigma_{\mathrm{act}})},
\qquad
\mathrm{AL}_k\;\coloneqq\;\tfrac1k\sum_{i=1}^{k}\sum_{j=1}^{k}|\langle u_i, v_j\rangle|^2 \;=\; \tfrac1k\,\|U_k^\top V_k\|_F^2.$
$\mathrm{VE}_k$ measures the fraction of activation variance captured by the weight-derived subspace; $\mathrm{AL}_k\in[0,1]$ measures direct geometric overlap between the weight and activation principal directions, and reduces to $\mathrm{AL}_k=1$ when the two subspaces coincide. For a Haar-random $k$-frame the expected value is $\mathbb{E}[\mathrm{AL}_k]=k/d_{\mathrm{in}}$, which we use as a chance-level reference.

\begin{figure}[t]
  \centering
  \includegraphics[width=\linewidth]{figures/PLATE_weight_vs_activation_geometry.png}
  \caption{\textbf{Weight geometry vs.\ activation geometry.}
  Activation variance captured by three input subspaces of dimension $k$: the activation oracle (top-$k$ eigenvectors of $\Sigma_{\mathrm{act}}$, green), the weight-derived subspace used by PLATE (top-$k$ right singular vectors of $W$, blue), and a random $k$-frame (red).
  \textbf{(Top)} MNIST MLP layers; \textbf{(Bottom)} Qwen2.5-3B averaged across $7$ transformer blocks per module type.
    For most modules, weight-derived subspaces capture substantially more activation variance than random of equal size (e.g., $\sim 4{-}20\times$ in MNIST MLP layers; $\sim 4{-}26\times$ in \texttt{gate\_proj}, \texttt{down\_proj}, \texttt{q\_proj} of Qwen2.5-3B), confirming that weight geometry is a useful, data-free proxy for the dominant activation directions. The exception is \texttt{v\_proj}, where weight and activation principal directions are nearly orthogonal: this corner case is consistent with the key/value memory view of FFN/attention layers, in which colinearity of one projection's weights need not imply colinearity in the matching activation distribution. The rightmost MNIST panel reports the alignment metric $\mathrm{AL}_k$ for the PLATE weight basis (solid bars) and the chance-level random baseline $k/d_{\mathrm{in}}$ (hatched bars), at $k{=}8$ (blue) and $k{=}32$ (purple); PLATE values sit $\sim 5{-}50\times$ above chance across all layers.}
  \label{fig:weight-vs-activation}
\end{figure}

\begin{table}[H]
\centering
\caption{Variance-explained ratio between the PLATE weight-derived subspace and a random $k$-frame ($\mathrm{VE}_k(W)/\mathrm{VE}_k(\mathrm{random})$). Larger means the weight geometry better matches activation geometry. MNIST values are per layer; Qwen2.5-3B values are averaged across $7$ transformer blocks.}
\label{tab:weight-vs-activation}
\footnotesize
\begin{tabular}{lccc|lcc}
\toprule
\multicolumn{4}{c|}{\textbf{MNIST MLP}} & \multicolumn{3}{c}{\textbf{Qwen2.5-3B (avg.\ over $7$ blocks)}} \\
\textbf{Layer} & $d_{\mathrm{in}}$ & $k{=}8$ & $k{=}32$ & \textbf{Module} & $k{=}8$ & $k{=}32$ \\
\midrule
fc1 & 784 & $22.0\times$ & $10.3\times$ & \texttt{self\_attn.q\_proj}  & $6.5\times$  & $3.3\times$ \\
fc2 & 256 & $11.9\times$ & $4.7\times$  & \texttt{self\_attn.v\_proj}  & $1.6\times$  & $0.9\times$ \\
fc3 & 256 & $20.6\times$ & $5.2\times$  & \texttt{mlp.gate\_proj}      & $26.0\times$ & $8.8\times$ \\
    &     &              &              & \texttt{mlp.down\_proj}      & $11.1\times$ & $4.2\times$ \\
\bottomrule
\end{tabular}
\end{table}

This experiment validates a central assumption of PLATE: although $Q$ is built \emph{without ever observing} the activation distribution, it preferentially captures high-variance activation directions much better than chance for most modules. Importantly, $Q$ does \emph{not} match the oracle activation eigenbasis exactly, and for some modules (notably \texttt{v\_proj}) the alignment is at the level of chance. 

\subsection{Robustness under base-model compression}
\label{app:compression}
PLATE's geometric assumption rests on the existence of redundant directions in the pretrained weight matrices. Structured compression methods that remove redundant components from the backbone are therefore a natural stress test for the method. We use \mbox{SliceGPT}~\citep{ashkboos2024slicegpt} to compress Qwen2.5-3B at two pruning ratios (20\% and 30\%) and apply PLATE on top of each compressed model under the same Middle English adaptation protocol as Section~\ref{sec:exp_middle_english}, with $\tau{=}0.98$ and $r\in\{32,64,128\}$.

\begin{figure}[H]
  \centering
  \includegraphics[width=.7\linewidth]{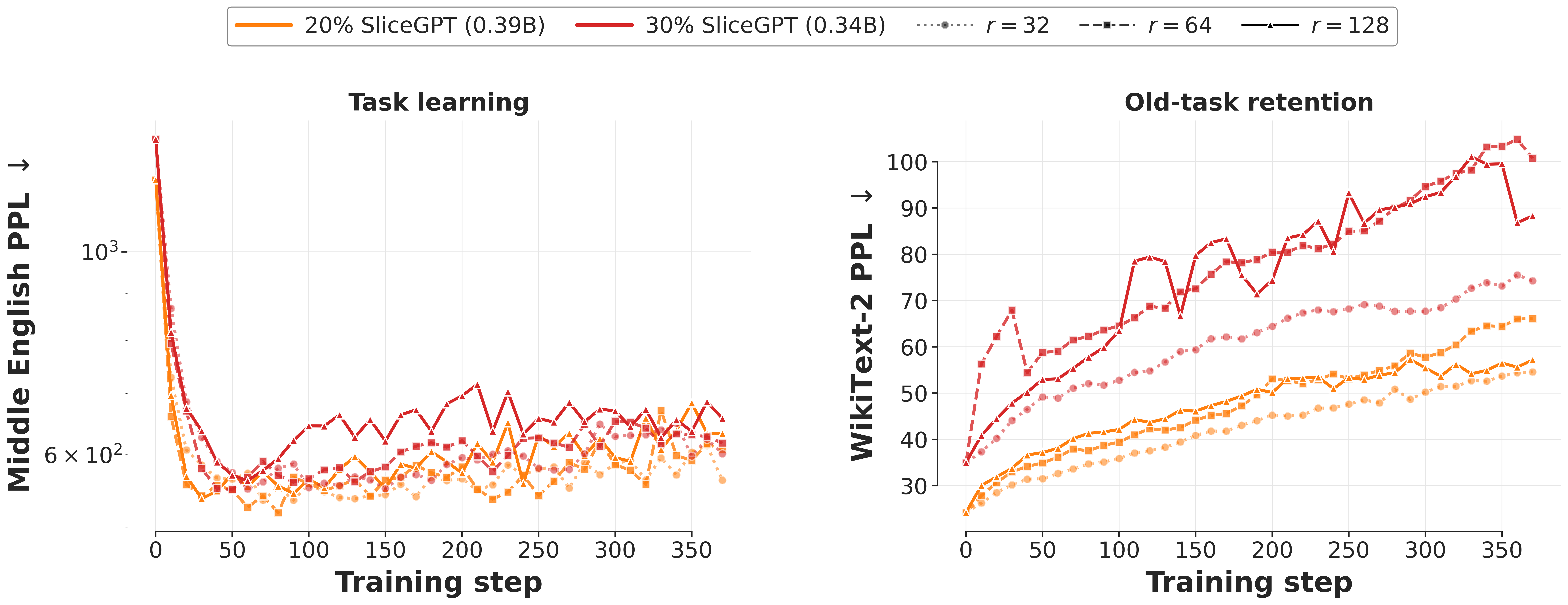}
  \caption{\textbf{PLATE on SliceGPT-compressed Qwen2.5-3B.}
  \textbf{(Left)} Middle English perplexity over training (task learning).
  \textbf{(Right)} WikiText-2 perplexity over training (retention).
  PLATE continues to learn the new task on both compressed backbones (task PPL drops by $\sim 47{-}55\%$ relative to initialization). Retention degrades with the more aggressive compression: WikiText-2 PPL rises more sharply on the $30\%$-pruned model than on the $20\%$-pruned one, consistent with the fact that pruning removes some of the redundant directions PLATE relies on. The behaviour is continuous rather than abrupt.}
  \label{fig:slicegpt-compression}
\end{figure}

These results confirm that PLATE's mechanism degrades smoothly under compression: adaptation continues to work on both compressed backbones (task PPL still drops substantially), while retention becomes harder as the redundancy budget PLATE exploits is more aggressively removed by pruning. We view this as evidence that PLATE remains usable on compressed deployments, with the expected caveat that stronger compression reduces the headroom of the retention--plasticity trade-off.

\subsection{Behaviour on longer task sequences ($T \ge 3$)}
\label{app:saturation}
Our main experiments focus on the canonical two-task continual-learning setup. For completeness, we evaluate PLATE under a sequential protocol with four tasks, using the recursive construction described in Appendix~\ref{app:sequential-protocol}. Following the original setup of Section~\ref{sec:exp_mnist}, we train a 3-layer ReLU MLP on a sequence of MNIST binary tasks ($2$v$3 \to 4$v$5 \to 6$v$7 \to 8$v$9$) and report all $20$ $(r,\tau)$ configurations from the main sweep.

\begin{figure}[H]
  \centering
  \includegraphics[width=\linewidth]{figures/PLATE_multi_task_saturation.png}
  \caption{\textbf{PLATE under the four-task sequential MNIST protocol.}
  \textbf{(Left)} Accuracy matrix at the canonical $(r{=}8,\tau{=}0.98)$ config: rows are training stages, columns are evaluation tasks.
  \textbf{(Middle)} Mean accuracy on each previously seen task across all $20$ sweep configurations, as a function of how many tasks have been trained.
  \textbf{(Right)} Selected-$B$ overlap with the previous task at the canonical config: small overlaps indicate that successive tasks pick mostly disjoint trainable channels from PLATE's redundant pool.}
  \label{fig:saturation}
\end{figure}

Two observations support that PLATE behaves sensibly beyond two tasks. First, the diagonal of the accuracy matrix stays at $\geq 97\%$ on every newly trained task: new-task adaptation is preserved across the sequence. Second, the off-diagonal entries decay progressively: at the canonical $(r{=}8, \tau{=}0.98)$ config, accuracy on Task~$1$ drops from $97.3\%$ at the end of Task~$1$ to $90.9\%$ at the end of Task~$4$, and similar mild decay holds for intermediate tasks. The aggregate retention curve confirms this trend across the entire sweep, with the predictable $(r,\tau)$ dependence: more conservative configurations (smaller $r$, larger $\tau$) accumulate less forgetting over the stream.


\subsection{Additional parameter sweep for Middle-English experiment \& binary classification}
\label{ap:additional_sweep}
\begin{figure}[t]
  \centering
\includegraphics[width=.7\linewidth,height=0.40\textheight,keepaspectratio]{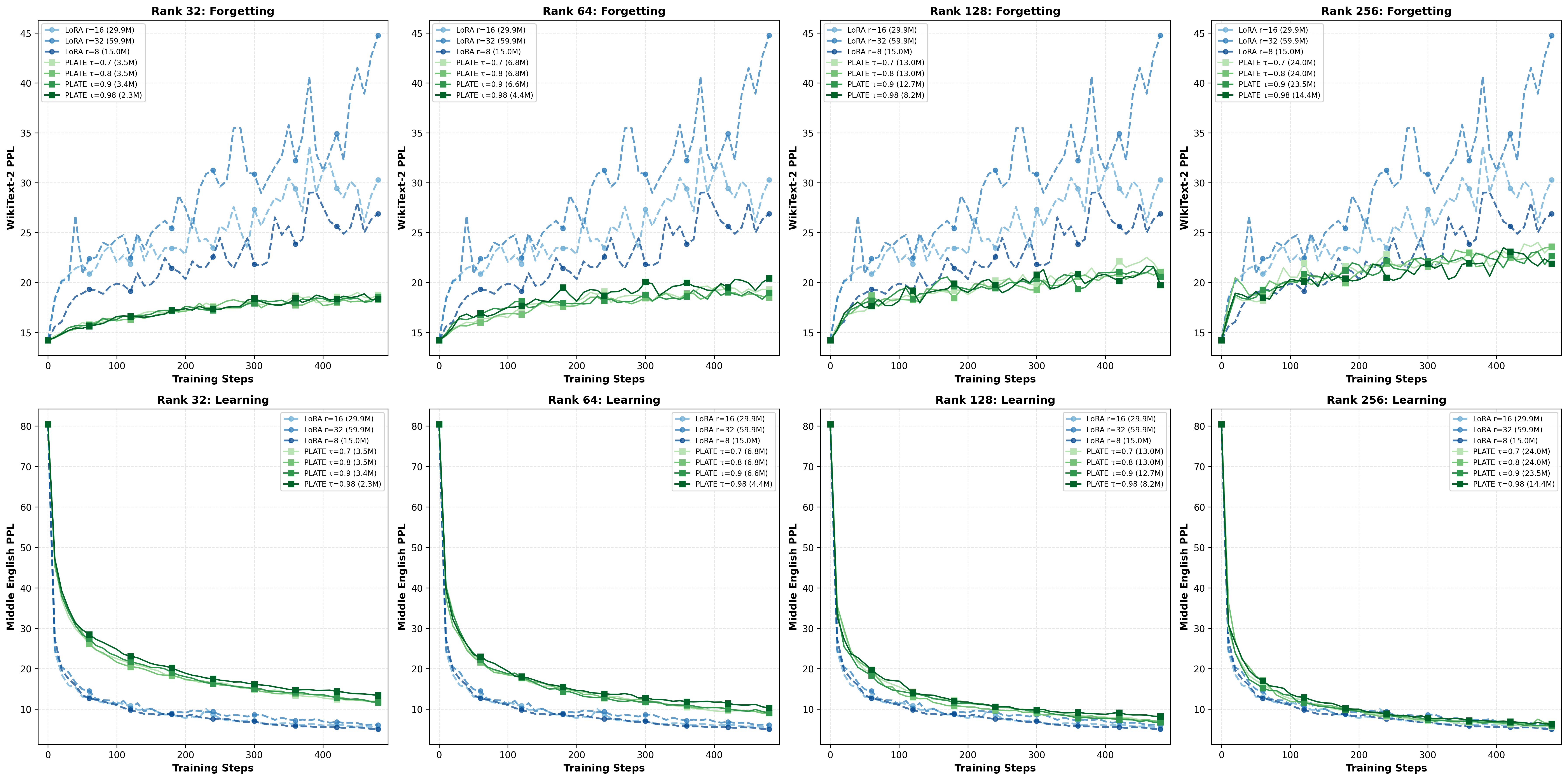}
  \caption{\textbf{Qwen 2.5-3B - Extended PLATE sweep across $(r,\tau)$:}
  Columns fix the PLATE output rank $r\in\{32,64,128,256\}$ and sweep $\tau\in\{0.70,0.80,0.90,0.98\}$ (green, solid) against LoRA baselines with varying ranks (blue, dashed).
  Top row reports WikiText-2 perplexity (forgetting) and bottom row reports Middle English perplexity (task learning), both over training steps.}
  \label{fig:llm_param_sweep_per_rank}
\end{figure}

\begin{figure}[h]
  \centering
\includegraphics[width=\linewidth,height=0.42\textheight,keepaspectratio]{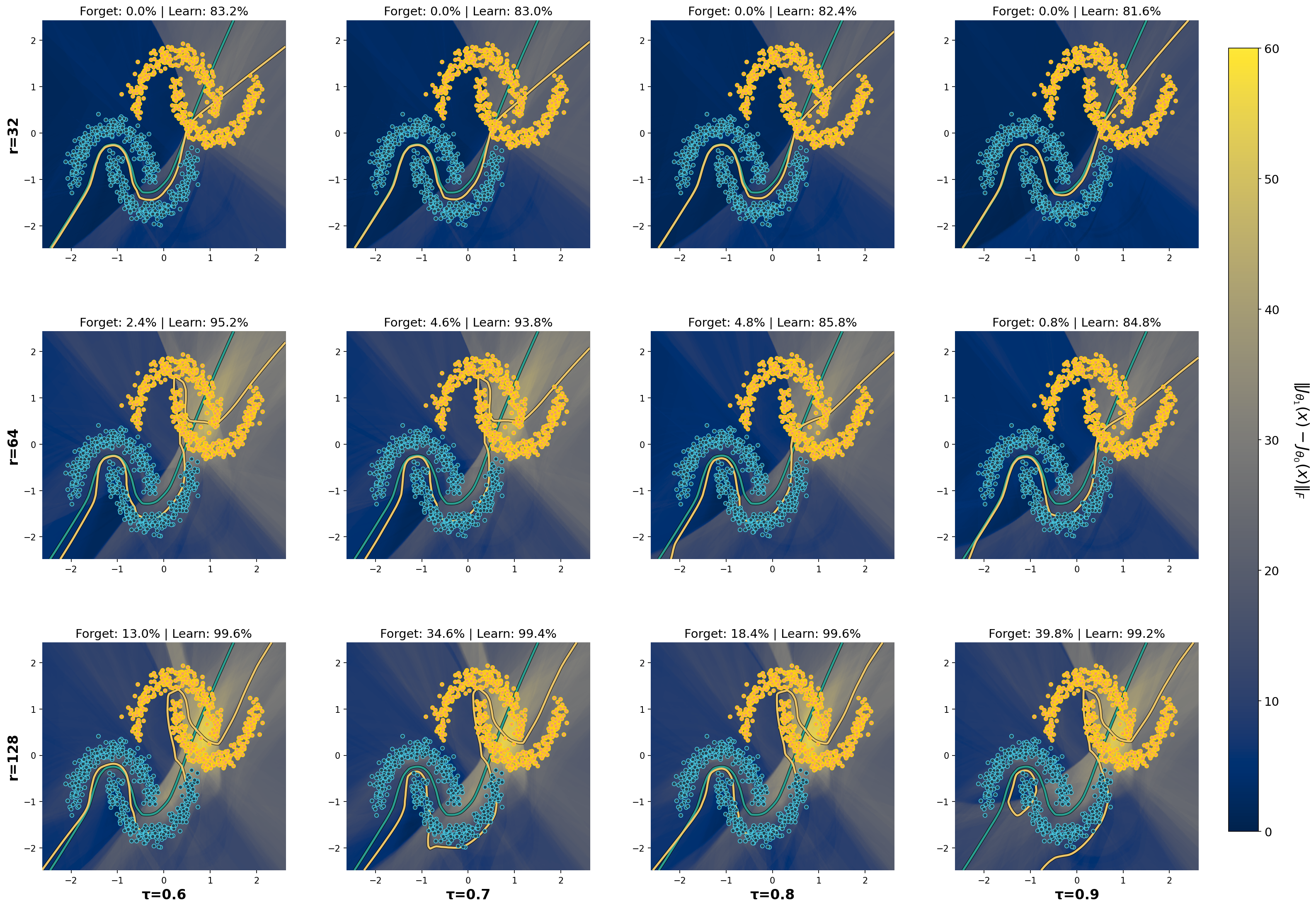}
  \caption{\textbf{Local-geometry view of forgetting - Extended PLATE sweep across $(r,\tau)$:}
  We sweep PLATE’s $(r,\tau)$ on a two-moons continual-learning toy: increasing $r$ expands the plasticity budget and improves task~2 performance but can increase task~1 drift/forgetting, while increasing $\tau$ tends to concentrate updates onto more redundant degrees of freedom and reduces drift/forgetting. Overall, PLATE provides an explicit mechanism to target a desired point on the retention-adaptation trade-off.}
  \label{fig:plate-parameter-sweep}
\end{figure}

\section{Experimental Details}

\begin{table}[htbp]
\centering
\caption{Hyperparameter grids and target modules for each adaptation method.}
\label{tab:method_hyperparams}
\footnotesize
\setlength{\tabcolsep}{4pt}
\renewcommand{\arraystretch}{1.15}

\begin{tabularx}{\textwidth}{@{}
  >{\raggedright\arraybackslash}p{1.6cm}
  >{\raggedright\arraybackslash}p{3.2cm}
  X
@{}}
\toprule
\textbf{Method} & \textbf{Hyperparameter} & \textbf{Values} \\
\midrule

\multirow{3}{*}{\textbf{LoRA}}
& Rank $r$         & Vision/Regression/NLP: $\{1,8,16,32,64,128\}$; LLM: $\{8,16,32\}$ \\
& Scaling $\alpha$ & $\alpha/r=0.5$ \\
& Target modules   & Vision/Regression/NLP: all linear layers; LLM: attention and MLP projections (e.g., \texttt{q,k,v,o,up,down,gate}). \\
\midrule

\multirow{5}{*}{\textbf{PLATE}}
& Rank $r$               & Vision: $\{32,64,128,256,350\}$; NLP: $32$; Regression: $50$; LLM: $\{32,64,128,256\}$ \\
& Energy threshold $\tau$ & Vision/NLP: $\{0.6,0.7,0.8,0.9\}$; LLM: $\{0.70,0.80,0.90,0.98\}$. \\
  & Max rank $r_{\max}$    & $\{256,512\}$ \\
  & Scaling $\rho$         & $0.5$ \\
  & Target modules         & Vision/Regression/NLP: all linear layers. \\
\midrule

\multirow{2}{*}{\textbf{Full FT}}
  & Trainable parameters & All parameters. \\
\bottomrule
\end{tabularx}
\end{table}

\begin{table}[htbp]
\centering
\caption{Domain-specific experimental configurations. All experiments follow Algorithm~\ref{alg:general_protocol} with $K=10$ runs.}
\label{tab:experimental_configs}
\resizebox{\textwidth}{!}{
\begin{tabular}{lccccc}
\toprule
\textbf{Domain} & \textbf{Tasks $\mathcal{D}_1 \to \mathcal{D}_2$} & \textbf{Architecture} & \textbf{Epochs ($E_1/E_2$)} & \textbf{Learning Rate} & \textbf{Batch Size} \\
\midrule
\textbf{Vision} &
MNIST 0-4 $\to$ 5-9 (5 classes each) &
3-layer ReLU MLP (784$\to$256$\to$256$\to$256) + two 256$\to$5 heads &
10/10 & $10^{-3}$ (Adam) & 128 \\
\textbf{NLP (small)} &
AG News $\to$ IMDB (4 classes $\to$ 2 classes) &
DistilBERT-base (6 layers, 768-dim) + two linear heads (768$\to$4/2) &
3/3 & $2\times10^{-5}$ (AdamW, 10\% warmup) & 32 \\
\textbf{Regression} &
$f_1(\mathbf{x}) \to f_2^\alpha(\mathbf{x})$ on Gaussian inputs ($d=100$) &
2-layer $\tanh$ MLP (100$\to$512$\to$512) + two 512$\to$1 heads &
100/100 & $10^{-3}$ (Adam) & 128 \\
\textbf{LLM} &
WikiText-2 $\to$ Middle English (EN-ME) &
Qwen/Qwen2.5-3B (Causal LM), adapters on attention/MLP projections &
0/1 & $10^{-3}$ (AdamW) & 16 \\
\bottomrule
\end{tabular}
}
\end{table}

\begin{algorithm}[H]
\caption{Parameter-efficient two-task continual learning protocol}
\label{alg:general_protocol}
\begin{algorithmic}[1]
\REQUIRE $\mathcal{D}_1,\mathcal{D}_2$, base model $f_\theta$, heads $h_1,h_2$, methods $\mathcal{M}=\{\text{Full-FT},\text{LoRA},\text{PLATE}\}$, configs $\mathcal{C}_m$
\FOR{$k = 1$ to $K$}
  \STATE Initialize $f_\theta^{(k)}, h_1^{(k)}, h_2^{(k)}$
  \STATE \textbf{Stage 1:} train $(f_\theta^{(k)}, h_1^{(k)})$ on $\mathcal{D}_1$ for $E_1$ epochs
  \STATE Record baseline Task~1 accuracy and save checkpoint $\theta_1^{(k)}$ 
  \FOR{method $m \in \mathcal{M}$}
    \FOR{config $c \in \mathcal{C}_m$}
      \STATE Restore $f_{\theta^{(k)}} \gets f_{\theta_1^{(k)}}$, freeze $h_1^{(k)}$
      \STATE Apply adapters for $(m,c)$ to $f_{\theta^{(k)}}$ and define trainable set $\Theta$ (plus $h_2^{(k)}$)
      \STATE \textbf{Stage 2:} train $(f_{\theta^{(k)}}, h_2^{(k)})$ on $\mathcal{D}_2$ for $E_2$ epochs
      \STATE Evaluate Task~2 (learnability) \& Task~1 (forgetting)
    \ENDFOR
  \ENDFOR
\ENDFOR
\STATE Aggregate means and standard deviations across $k$ for each $(m,c)$
\end{algorithmic}
\end{algorithm}

\section{Scope and Limitations}
\label{app:scope-limitations}
Our framing of PLATE makes a number of assumptions that are worth stating explicitly.
$(i)$~The bound in Proposition~\ref{prop:gn-upper} uses a global smoothness constant $\beta$ such that $\nabla_f^2 \ell(f_{\theta_0}(x),y) \preceq \beta I$. This is satisfied by the losses used in the paper (e.g., binary logistic with $\beta=\tfrac14$, multiclass cross-entropy with $\beta=\tfrac12$), but the relevant local curvature can be smaller at confident operating points and the bound is therefore conservative away from decision boundaries.
$(ii)$~Proposition~\ref{prop:gn-upper} drops a residual second-order term in the Hessian decomposition for analytical clarity. Empirically (Appendix~\ref{app:gn-diagnostic}) this term is not exactly zero; the Gauss--Newton lens becomes most informative inside the PLATE subspace and less so on generic backbone or LoRA directions.
$(iii)$~PLATE uses weight-space similarity as a heuristic, data-free \emph{proxy} for functional redundancy. It is not a claim of exact functional equivalence: in the piecewise-affine view of MLPs, weight colinearity reduces the chance of new input partitions but does not control the output side of those partitions, and the alignment with activation geometry varies across modules (Appendix~\ref{app:weight-vs-activation}).
$(iv)$~PLATE's mechanism relies on the existence of redundant directions in the pretrained backbone. Aggressive structured compression removes some of this redundancy; the method degrades smoothly but with a worse retention--plasticity trade-off (Appendix~\ref{app:compression}).
$(v)$~Most of our experiments target the two-task setting that is the focus of the theory. We provide a recursive multi-task protocol (Appendix~\ref{app:sequential-protocol}) and empirical evidence on a four-task MNIST stream (Appendix~\ref{app:saturation}); the claims of this paper are best read as targeting data-free domain adaptation and short continual-learning streams rather than open-ended lifelong learning.

\end{document}